\theoremstyle{definition}
\theoremstyle{plain}
\newtheorem{definition}{Definition}
\newtheorem{theorem}{Theorem}
\providecommand{\keywords}[1]{\small\textbf{\textit{Keywords---}}#1}
\title{Kernel Alignment for Unsupervised Feature Selection via Matrix Factorization}
\author[ ]{Ziyuan Lin}
\author[1]{Deanna Needell}
\affil[1]{University of California, Los Angeles, Department of Mathematics}
\affil[ ]{e-mail: ziyuan.1.lin@gmail, deanna@math.ucla.edu}
\date{March 13, 2024}
\begin{document}

\maketitle

\begin{abstract}
By removing irrelevant and redundant features, feature selection aims to find a good representation of the original features. With the prevalence of unlabeled data, unsupervised feature selection has been proven effective in alleviating the so-called curse of dimensionality. Most existing matrix factorization-based unsupervised feature selection methods are built upon subspace learning, but they have limitations in capturing nonlinear structural information among features. It is well-known that kernel techniques can capture nonlinear structural information. In this paper, we construct a model by integrating kernel functions and kernel alignment, which can be equivalently characterized as a matrix factorization problem. However, such an extension raises another issue: the algorithm performance heavily depends on the choice of kernel, which is often unknown a priori. Therefore, we further propose a multiple kernel-based learning method. By doing so, our model can learn both linear and nonlinear similarity information and automatically generate the most appropriate kernel. Experimental analysis on real-world data demonstrates that the two proposed methods outperform other classic and state-of-the-art unsupervised feature selection methods in terms of clustering results and redundancy reduction in almost all datasets tested.
\end{abstract}
\keywords{Unsupervised feature selection, Kernel alignment, Matrix factorization, Multiple kernel learning}

\section{Introduction}

High-dimensional data appears in multiple scientific fields like computer vision \cite{parvaiz2023vision, roy2023wildect}, bioinformatics \cite{chen2023graph} and social networks \cite{sharma2023hybrid}. Dealing with such high-dimensional data significantly amplifies the complexity of data processing and computation, demanding increased storage space and computation time. Moreover, data with many dimensions often encompasses irrelevant, redundant, or even noisy features, which can adversely impact the performance of various learning algorithms, affecting outcomes like clustering and classification \cite{parsons2004subspace, fan2008high}. As a result, the pursuit of approximating the original data using fewer but high-quality features has emerged as a critical research focus, and dimensionality reduction algorithms provide a viable solution to address this challenge.

In dimensionality reduction techniques, feature selection \cite{guyon2003introduction} and feature extraction \cite{guyon2008feature} stand as the two primary methods. The former involves selecting a small subset of original features based on certain criteria, while the latter seeks a projection to map the original high-dimensional data into a lower-dimensional subspace. In other words, compared to feature extraction, feature selection algorithms retain the original features of the dataset, enhancing the interpretability of the model's outcomes. Moreover, in numerous data mining applications, sample labels remain unknown, rendering supervised and semi-supervised dimensionality reduction algorithms ineffective. In contrast, unsupervised learning methods solely rely on the intrinsic properties of data without necessitating class labels, thus making the development of unsupervised feature selection (UFS) algorithms indispensable \cite{solorio2020review}.

Subspace learning is regarded as a successful approach in the realm of unsupervised feature selection \cite{zhou2016global, wang2020discriminative, parsa2020unsupervised}. It aims to discover a mapping that can effectively project the original high-dimensional space into a lower-dimensional subspace that is representative and relevant. Subspace learning allows the use of powerful tools such as matrix factorization and various regularization frameworks. These tools not only can compress the feature space into smaller dimensions, but also can efficiently remove noise and redundant features from the data \cite{nie2020subspace}.

Within subspace learning, the concept of subspace distance plays a critical role. As one of the evaluation criteria for feature selection, subspace distance directly influences the learned mapping of the algorithm, subsequently impacting the quality of the selected features. For instance, the Matrix Factorization Feature Selection (MFFS) algorithm proposed in \cite{wang2015subspace} introduces the concept of directional distance, measuring the distance between the original data and the selected feature subset projection back to the original space. It allows non-negative matrix factorization tools to solve the feature selection model. The Regularized Self-representation (RSR) method \cite{zhu2015unsupervised} proposes self-representation distance, assuming that each feature of the original data can be well approximated by a linear combination of related features. This approach also discerns redundant features and mitigates interference from outliers using the \(l_{2,1}\)-norm. Furthermore, the Variance–Covariance Subspace Distance Feature Selection (VCSDFS) method \cite{karami2023unsupervised} introduces a subspace distance termed variance and covariance distance. It thoroughly considers the variance of individual features and their correlations, thereby selecting informative features. In addition, in subspace learning, regularization frameworks also play a crucial role in selecting effective features. For example, the feature  selection based on Maximum Projection and Minimum Redundancy (MPMR) method \cite{wang2015unsupervised} and the Regularized Matrix Factorization Feature Selection (RMFFS) method \cite{qi2018unsupervised}, both built upon the MFFS algorithm, respectively introduce a regularization framework based on linear correlation among a set of selected features and inner product regularization to reduce redundancy in the selected feature subset. The RSR method employs \(l_{2,1}\)-norm for sparse regularization of the feature selection matrix. Despite the good performance of these UFS methods, a crucial drawback is their exclusive focus on linear correlation between features. They do not capture the nonlinear relationships between features, leading to the inability to select more discriminative features.

In this paper, we introduces a kernel alignment-based subspace distance as an evaluation metric for subspace learning to construct an UFS model. This approach enables the consideration of nonlinear relationships between features to jointly assess the quality of each feature. Additionally, we employ non-negative matrix factorization techniques to generate efficient algorithms for model solution. Furthermore, a multiple kernel method for feature selection is also provided and its algorithm is developed.
\subsection{Other Related Work}

The notion of kernel alignment was first introduced by Cristianini et al. \cite{cristianini2001kernel}, which is used as a similarity measure between two kernel matrices. Cortes et al. \cite{cortes2012algorithms} improved upon kernel alignment by proposing unnormalized centered kernel alignment, which can cancel the effect of unbalanced class distribution.

In the context of UFS tasks, where guidance from class labels is lacking, for two samples in a given dataset, it is assumed that the perspectives on similarity between these two samples provided by two different kernels should be consistent. Therefore, maximizing the similarity between the two kernels, one computed from the original features and the other from the selected features, which can ensure that the selected features maximally preserve the information of the original features.

Several works have considered using kernel alignment in the design of UFS models. Wei et al. \cite{wei2016nonlinear} utilized unnormalized centered kernel alignment based on the Gaussian kernel to develop an UFS method. However, the derivation of their algorithm depends on the choice of kernel, making it challenging to generalize to other types of kernels. Karami et al. \cite{karami2023unsupervised} proposed an UFS method based on variance-covariance subspace distance. We consider that, from the perspective of kernel alignment, minimizing the variance-covariance subspace distance can be interpreted as maximizing the similarity between two linear kernels.  Xing et al. \cite{xing2021fairness}, based on kernel alignment, investigated the fairness issue of unsupervised feature selection by introducing protected attributes. The UFS method proposed by Palazzo et al. \cite{palazzo2020unsupervised} is time-consuming as it involves training two models. The approach employs autoencoders to construct a target kernel and then utilizes kernel alignment to select features. In our work, we focus on accelerating the solution of the kernel alignment based UFS model through matrix factorization. Additionally, taking into account the limitations of single kernel models, we propose an UFS model based on multiple kernel learning \cite{gonen2011multiple}.
\subsection{Contribution}

In this paper, we propose two novel UFS methods from the viewpoint of non-negative matrix factorization, which we term Kernel Alignment Unsupervised Feature Selection (KAUFS) and Multiple Kernel Alignment Unsupervised Feature Selection (MKAUFS). The main highlights of the proposed methods are summarized below:

\begin{itemize}
\item To capture the nonlinear relationships among features, we innovatively introduce kernel alignment into the modeling of UFS methods based on subspace learning.

\item Many existing UFS algorithms based on non-negative matrix factorization cannot handle datasets containing negative values. Our proposed algorithm can handle both nonnegative and negative values in the input data and the kernel matrix (Gram matrix), and we provide a proof of convergence for the algorithm.

\item Considering that the performance of a single kernel model heavily depends on the choice of the kernel, which is usually unknown and time-consuming to determine. Inspired by the multiple kernel learning approach \cite{gonen2011multiple}, we extend the KAUFS method to the MKAUFS method by constructing several candidate kernels and merging them to form a consensus kernel to alleviate this issue, and it also enables us to better exploit the heterogeneous features of real-world datasets.
\end{itemize}
\subsection{Organization}
The rest of this paper is structured as follows. The remainder of this section introduces the notation used in this paper. In Section \ref{The Framework of KAUFS}, we explain the details of the KAUFS method. In Section \ref{Algorithm and Convergence Analysis}, we derive an iterative update algorithm for the KAUFS method and prove its convergence. In Section \ref{Mutiple Kernel Method for KAUFS}, we extend KAUFS method to a multiple kernel version and design its algorithm. We also provide the computational complexity analysis for both algorithms. Numerical experiments are presented and analyzed in Section \ref{Numerical Experiments}. Finally, this paper is concluded in Section \ref{Conclusions}.
\subsection{Notation}
We denote scalars by lower or upper case letters. We denote matrices with boldface uppercase letters and vectors with boldface lowercase letters. \(\mathbf{I}_{k}\) is used to denote the identity matrix in \(\mathbb{R}^{k\times k}\). The matrix \(\mathbf{1}_{d\times k}\) is in \(\mathbb{R}^{d\times k}\) so that all entries are equal to one. An indicator matrix means that each element in the matrix is either 1 or 0, and any row or column can have at most one non-zero element. For a given matrix \(\mathbf{X}\), the Frobenius norm of \(\mathbf{X}\) is defined as \(\|\mathbf{X}\|_{F}\). \(\operatorname{Tr}\left(\cdot\right)\) is the trace operator. \(\mathbf{X}^{T}\) denotes the transpose of \(\mathbf{X}\). For two vectors \(\mathbf{a}\) and \(\mathbf{b}\), we denote their inner product as \(\left\langle \mathbf{a} ,\mathbf{b}\right\rangle\).

\section{The Framework of KAUFS}\label{The Framework of KAUFS}

In this section, we will comprehensively explain the detailed information required to describe the structure of the KAUFS method.

First, we are given a high-dimensional input data, which is represented by an $n\times d$ matrix \(\mathbf{X}=\left[\mathbf{x}_{1}; \mathbf{x}_{2}; \dots ;\mathbf{x}_{n}\right]=\left[\mathbf{f}_{1}, \mathbf{f}_{2}, \dots ,\mathbf{f}_{d}\right]\), where \(\mathbf{x}_{i}\) \(\in \mathbb{R}^{1\times d}\) is the \(i\)th sample, \(\mathbf{f}_{j}\) \(\in \mathbb{R}^{n\times 1}\) is the \(j\)th feature and \(n\) and \(d\) represent the number of samples and features, respectively. The goal of a feature selection framework is to choose a small subset of features from all features that capture the most valuable information, thus approximating and representing all features effectively. More specifically, given a fixed number \(k\), where \(k<d\), the objective of the KAUFS method is to select a set of \(k\) features \(\left \{ \mathbf{f}_{j_{1}}, \mathbf{f}_{j_{2}}, \dots ,\mathbf{f}_{j_{k}} \right \}\) from the dataset \(\mathbf{X}\). These selected features will form the selected sub-matrix \(\mathbf{X}_{I}=\left [ \mathbf{f}_{j_{1}}, \mathbf{f}_{j_{2}}, \dots ,\mathbf{f}_{j_{k}} \right ]\), such that \(\mathbf{X}_{I}\) can effectively approximate the original feature matrix \(\mathbf{X}=\left[\mathbf{f}_{1}, \mathbf{f}_{2}, \dots ,\mathbf{f}_{d}\right]\) according to some metric. Moreover, we can express \(\mathbf{X}_{I}\) as
\begin{equation}
\mathbf{X}_{I} = \mathbf{X}\mathbf{W}.\nonumber
\end{equation}
where \(\mathbf{W} \in \mathbb{R}^{d\times k}\) is referred to as the feature weight matrix and is an indicator matrix. 

After selecting the feature subset, leveraging the self-representation property, which assumes that linear combinations of related features can effectively approximate the features of the original data, we can further represent the \(k\) features in \(\mathbf{X}_{I}\) as a linear combination to approximate the \(d\) features in the original dataset \(\mathbf{X}\), expressed as \(\mathbf{X}\approx \mathbf{X}_{I}\mathbf{H}\). More specifically, 
\begin{equation}
\left\{\begin{array}{l}
\mathbf{f}_{1} \approx  \mathbf{X W h}_{1}=\mathbf{H}_{11} \mathbf{f}_{j_{1}}+\mathbf{H}_{21} \mathbf{f}_{j_{2}}+\cdots+\mathbf{H}_{k 1} \mathbf{f}_{j_{k}}, \\
\mathbf{f}_{2} \approx \mathbf{X W h}_{2}=\mathbf{H}_{12} \mathbf{f}_{j_{1}}+\mathbf{H}_{22} \mathbf{f}_{j_{2}}+\cdots+\mathbf{H}_{k 2} \mathbf{f}_{j_{k}}, \\
\vdots \\
\mathbf{f}_{d} \approx \mathbf{X W h}_{d}=\mathbf{H}_{1 d} \mathbf{f}_{j_{1}}+\mathbf{H}_{2 d} \mathbf{f}_{j_{2}}+\cdots+\mathbf{H}_{k d} \mathbf{f}_{j_{k}},
\end{array}\right. \label{211}
\end{equation}
where \(\mathbf{H} \in \mathbb{R}^{k\times d}\) is referred to as the representation matrix and its \(i\)th column denoted by \(\mathbf{h}_{i}\), for \(i = 1, \dots, d\). The entries of \(\mathbf{H}\) are denoted by \(\mathbf{H}_{ij}\), for \(i = 1, \dots, k\) and \(j = 1, \dots, d\).

In summary, we use the linear combinations of the selected features to approximate the high-dimensional input data matrix, i.e., \(\mathbf{X}\approx \mathbf{X}\mathbf{W}\mathbf{H}\). This is the basis for the introduction of a novel UFS method, namely MFFS \cite{wang2015subspace} which utilizes the Frobenius norm to evaluate the distance between the spaces spanned by \(\mathbf{X}\) and \(\mathbf{X}\mathbf{W}\mathbf{H}\), ultimately transformed into a non-negative matrix factorization problem as:
\begin{equation}
\begin{aligned} 
& \arg \min _{\mathbf{W}, \mathbf{H}} \|\mathbf{X}-\mathbf{X} \mathbf{W} \mathbf{H}\|_F
\\ & \quad \text { subject to } \mathbf{W} \geq 0, \quad \mathbf{H} \geq 0, \quad \mathbf{W} \text{ is an indicator matrix.}
\end{aligned}\label{212}
\end{equation}

Some subsequent studies \cite{wang2015unsupervised, qi2018unsupervised, shang2019local, li2022unsupervised} have made improvements within this framework. However, the Frobenius norm measuring the distance between the space formed by the original space and the selected features fails to consider nonlinear relationships between features and is sensitive to noise and outliers \cite{nie2010efficient}. Given these considerations, in this section, we propose a kernel alignment based learning method for unsupervised feature selection, termed KAUFS. The KAUFS method characterizes the similarity between two kernels, one associated with the original features and the other with the selected features, enabling the selection of highly discriminative features which consider the nonlinear relationships among features.
\subsection{Kernel Alignment}

We briefly review several concepts related to centered kernel and kernel alignment following \cite{cortes2012algorithms}. Suppose that the data matrix \(\mathbf{X}=\left[\mathbf{x}_{1}; \mathbf{x}_{2}; \dots ;\mathbf{x}_{n}\right] \in \mathbb{R}^{n\times d}\). A mapping function \(\Phi : \mathbb{R}^{d} \to \mathcal{H}\) which maps the data samples from the input space to a reproducing kernel Hilbert space \(\mathcal{H}\). Thus, we have the transformed data sample \(\mathbf{x}\) from the original space \(\mathbf{X}\) to the kernel space is \(\Phi\left(\mathbf{x}\right)\). In addition, the mapping can be centered by subtracting its empirical expectation, that is forming it by \(\Phi\left(\mathbf{x}\right)-\bar{\Phi}\), where \(\bar{\Phi}=\frac{1}{n}\sum_{i=1}^{n}\Phi\left(\mathbf{x}_{i}\right)\). 

Consider a positive definite kernel function \(k:\mathbb{R}^{d} \times \mathbb{R}^{d} \to \mathbb{R}\) which can be induced by the mapping function \(\Phi\) with entries, \(k\left(\mathbf{x}, \mathbf{x}'\right)=\left \langle \Phi\left(\mathbf{x}\right),\Phi\left(\mathbf{x}'\right) \right \rangle=\Phi\left(\mathbf{x}\right)^{T}\Phi\left(\mathbf{x}'\right)\). Centering the positive definite kernel function consists of centering all the mapping \(\Phi\) associated with \(k\).

\begin{definition}[Centered Kernel Function]
For any two samples \(\mathbf{x}\) and \(\mathbf{x}'\) in the data matrix \(\mathbf{X}\), the positive definite kernel function \(k\left(\mathbf{x}, \mathbf{x}'\right)\) after centering is
\begin{equation}
\begin{aligned}
k_{c}\left(\mathbf{x}, \mathbf{x}'\right)&=\left(\Phi\left(\mathbf{x}\right)-\bar{\Phi}\right)^{T}\left(\Phi\left(\mathbf{x}'\right)-\bar{\Phi}\right)\\
&=k\left(\mathbf{x}, \mathbf{x}'\right)-\frac{1}{n} \sum_{i=1}^{n} k\left(\mathbf{x}, \mathbf{x}_{i}\right)-\frac{1}{n} \sum_{i=1}^{n} k\left(\mathbf{x}_{i}, \mathbf{x}'\right)+\frac{1}{n^{2}}\sum_{i=1}^{n}\sum_{j=1}^{n}k\left(\mathbf{x}_{i}, \mathbf{x}_{j}\right) 
\end{aligned}.\nonumber
\end{equation}
\end{definition}

\begin{definition}[Centered Kernel Matrix]
The positive semidefinite kernel matrix \(\mathbf{K}\), associated with the positive definite kernel function \(k\) for all samples in the data matrix \(\mathbf{X}\), where the sample index \(i,j \in \left[1,n\right]\), after centering is
\begin{equation}
\left[\mathbf{K}_{c}\right]_{ij}=\mathbf{K}_{ij}-\frac{1}{n} \sum_{j=1}^{n} \mathbf{K}_{ij}-\frac{1}{n} \sum_{i=1}^{n} \mathbf{K}_{ij}+\frac{1}{n^{2}}\sum_{i=1}^{n}\sum_{j=1}^{n}\mathbf{K}_{ij}.\nonumber
\end{equation}
\end{definition}

Denoting \(\mathbf{\Lambda} =\mathbf{I}_{n}-\frac{1}{n} \mathbf{1}_{n\times n}\), the centered kernel matrix \(\mathbf{K}_{c}\) can also be expressed as
\begin{equation}
\mathbf{K}_{c} = \mathbf{\Lambda}\mathbf{K}\mathbf{\Lambda}.\label{223}
\end{equation}

\begin{definition}[Centered Kernel Alignment]
Let \(k\) and \(k'\) denote two positive definite kernel functions defined over \(\mathbb{R}^{d} \times \mathbb{R}^{d}\), and \(\mathbf{K}\) and \(\mathbf{K}'\) represent their kernel matrices, satisfying \(\|\mathbf{K}_c\|_F \ne 0\) and \(\|\mathbf{K}'_c\|_F \ne 0\). Then, the alignment \(\rho \left(k,k'\right)\) between \(k\) and \(k'\) and the alignment \(\hat{\rho}\left(\mathbf{K},\mathbf{K}'\right)\) between \(\mathbf{K}\) and \(\mathbf{K}'\) are defined by
\begin{equation}
\rho\left(k, k'\right)=\frac{\mathrm{E}\left[k_{c} k_{c}^{\prime}\right]}{\sqrt{\mathrm{E}\left[k_{c}^{2}\right] \mathrm{E}\left[k_{c}^{\prime 2}\right]}} \quad \text{and} \quad \hat{\rho}\left(\mathbf{K}, \mathbf{K}'\right)=\frac{\operatorname{Tr}\left(\mathbf{K}_{c} \mathbf{K}'_{c}\right)}{\left\|\mathbf{K}_{c}\right\|_{F} \left\|\mathbf{K}'_{c}\right\|_{F}}.\nonumber
\end{equation}
\end{definition}

The centered kernel matrix alignment \(\hat{\rho}\left(\mathbf{K}, \mathbf{K}'\right)\) can be viewed as calculating the cosine similarity between two kernel matrices. In addition, in many cases, incorporating the normalization term \(\left\|\mathbf{K}_{c}\right\|_{F} \left\|\mathbf{K}'_{c}\right\|_{F}\)  typically adds complexity to the optimization problem. Hence, a more widely adopted approach to kernel alignment is its unnormalized version \cite{wei2016nonlinear}.

\begin{definition}[Unnormalized Centered Kernel Alignment]
Let \(k\) and \(k'\) denote two positive definite kernel functions defined over \(\mathbb{R}^{d} \times \mathbb{R}^{d}\), and \(\mathbf{K}\) and \(\mathbf{K}'\) represent their kernel matrices for a sample of size \(n\). Then, the unnormalized alignment \(\rho_{u} \left(k,k'\right)\) between \(k\) and \(k'\) and the unnormalized alignment \(\hat{\rho}_{u}\left(\mathbf{K},\mathbf{K}'\right)\) between \(\mathbf{K}\) and \(\mathbf{K}'\) are defined by
\begin{equation}
\rho_{u}\left(k, k'\right)=\mathrm{E}\left[k_{c} k_{c}^{\prime}\right]  \quad \text{and} \quad \hat{\rho}_{u}\left(\mathbf{K}, \mathbf{K}'\right)=\frac{1}{n^{2}}\operatorname{Tr}\left(\mathbf{K}_{c} \mathbf{K}'_{c}\right).\nonumber
\end{equation}
\end{definition}

It is worthy to note that since the kernel alignment is only evaluated using positive semidefinite kernel matrices, UFS methods proposed in this paper are not suitable for utilizing non-positive semidefinite kernel matrices.

The main idea of our proposed method is to seek a dataset composed of a concise set of features that aligns best with the original dataset in the kernel space. The objective is to maximize the similarity between the kernel matrices corresponding to the original features and the selected features, ensuring that the selected features retain the information from the original features to the greatest extent. Specifically, given a high-dimensional input data matrix \(\mathbf{X}=\left[\mathbf{x}_{1}; \mathbf{x}_{2}; \dots ;\mathbf{x}_{n}\right] \in \mathbb{R}^{n\times d}\) and a positive definite kernel function \(k:\mathbb{R}^{d} \times \mathbb{R}^{d} \to \mathbb{R}\), we can obtain the kernel matrix \(\mathbf{K}\) corresponding to the original features:
\begin{equation}
\mathbf{K}=\left(\begin{array}{ccc}k\left(\mathbf{x}_1, \mathbf{x}_1\right) & \cdots & k\left(\mathbf{x}_1, \mathbf{x}_n\right) \\ \vdots & \ddots & \vdots \\ k\left(\mathbf{x}_n, \mathbf{x}_1\right) & \cdots & k\left(\mathbf{x}_n,\mathbf{x}_n\right)\end{array}\right).\nonumber
\end{equation}

For the dataset \(\mathbf{X}\mathbf{W}\mathbf{H}\) formed by the selected features through the model, we compute the linear kernel matrix corresponding to this dataset. We desire the linear kernel matrix of the selected features and the kernel matrix of the original features to exhibit similar perspectives on the similarity among samples. Therefore, our objective is to select features that maximize the following unnormalized centered kernel matrix alignment expression
\begin{equation}
\frac{1}{n^{2}}\operatorname{Tr}\left(\mathbf{\Lambda}\mathbf{K}\mathbf{\Lambda}\mathbf{\Lambda}\mathbf{X}\mathbf{W}\mathbf{H}\mathbf{H}^T\mathbf{W}^T\mathbf{X}^T\mathbf{\Lambda}\right)=\frac{1}{n^{2}}\operatorname{Tr}\left(\mathbf{\Lambda}\mathbf{K}\mathbf{\Lambda}\mathbf{X}\mathbf{W}\mathbf{H}\mathbf{H}^T\mathbf{W}^T\mathbf{X}^T\right),\label{221}
\end{equation}
where the second equation can be obtain by the fact that \(\mathbf{\Lambda}\mathbf{\Lambda}=\mathbf{\Lambda}\) and the trace cyclic property.

Similar to the problem \eqref{212}, according to the centered kernel matrix formula \eqref{223} and the kernel alignment function \eqref{221}, the primary formulation of our feature selection method can be expressed as the following matrix factorization problem: 
\begin{equation}
\begin{aligned} 
& \arg \min _{\mathbf{W}, \mathbf{H}} -\operatorname{Tr}\left(\mathbf{K}_{c}\mathbf{X}\mathbf{W}\mathbf{H}\mathbf{H}^{T}\mathbf{W}^{T}\mathbf{X}^{T}\right) 
\\ & \quad \text { subject to } \mathbf{W} \geq 0, \quad \mathbf{H} \geq 0, \quad \mathbf{W} \text{ is an indicator matrix.}
\end{aligned}\label{222}
\end{equation}
\subsection{KAUFS Model}

In order for relaxing the feature weight matrix \(\mathbf{W}\) from being an indicator matrix, Wang et al. \cite{wang2015subspace} initially proposed that \(\mathbf{W}\) should satisfy \(\mathbf{W}^{T}\mathbf{W}=\mathbf{I}_{k}\). However, Saberi-Movahed et al. \cite{saberi2022dual} pointed out that this indicator constraint on \(\mathbf{W}\) may not necessarily yield an indicator matrix. Moreover, such constraint may not perform effectively in identifying salient features. Inspired by Han et al. \cite{han2015selecting}, our proposed algorithm for matrix factorization-based feature selection incorporates a regularization method known as inner product regularization. This regularization is employed to characterize both sparsity and low redundancy simultaneously in the variables. Thus, we introduce the inner product regularization for learning the feature weight matrix \(\mathbf{W}\), defined as follows
\begin{equation}
\operatorname{Reg}(\mathbf{W})=\sum_{i=1}^{d}\sum_{j=1,j\ne i}^{d}  \left\langle\mathbf{w}_i, \mathbf{w}_j\right\rangle=\operatorname{Tr}\left(\mathbf{1}_{d\times d} \mathbf{W} \mathbf{W}^T\right)-\operatorname{Tr}\left(\mathbf{W} \mathbf{W}^T\right).\label{231}
\end{equation}

Recently, researchers have incorporated a regularization strategy termed dual sparsity inner product regularization into the feature selection model to enhance the selection of more prominent features \cite{saberi2022dual}. That is, the inner product regularization should not only apply to \(\mathbf{W}\) but also to the representation matrix \(\mathbf{H}\). This consideration arises from the fact that, according to the equation group \eqref{211}, we can obtain the \(i\)-th original feature in \(\mathbf{X}\) approximation expression
\begin{equation}
\mathbf{f}_{i} \approx \mathbf{X W h}_{i}=  \mathbf{H}_{1i} \mathbf{f}_{j_{1}}+\mathbf{H}_{2i} \mathbf{f}_{j_{2}}+\cdots+\mathbf{H}_{k i} \mathbf{f}_{j_{k}}. \label{232}
\end{equation}
The \(i\)-th column in \(\mathbf{H}\) given in \eqref{232} plays an inevitable role in the self-representation property. In other words, when two original features \(\mathbf{f}_{i}\) and \(\mathbf{f}_{j}\) are highly dependent, it's reasonable to expect that their corresponding columns in the representation matrix which denoted as \(\mathbf{h}_{i}\) and \(\mathbf{h}_{j}\) should exhibit similar patterns. The inner product regularization applied to the representation matrix not only induces sparsity in the columns of \(\mathbf{H}\) but also shows beneficial in identifying redundant features, preserving those with discriminative qualities. Thus, leveraging the advantages of inner product regularization, a regularization on \(\mathbf{H}\) can be designed as
\begin{equation}
\operatorname{Reg}(\mathbf{H})=\sum_{i=1}^{d}\sum_{j=1,j\ne i}^{d}  \left\langle\mathbf{h}_i, \mathbf{h}_j\right\rangle=\operatorname{Tr}\left(\mathbf{1}_{d\times d} \mathbf{H}^T \mathbf{H}\right)-\operatorname{Tr}\left(\mathbf{H}^T \mathbf{H}\right).\label{233}
\end{equation}

In summary, by substituting the indicator constraints for \(\mathbf{W}\) with the inner product regularization provided by \eqref{231}, and integrating the regularization term on \(\mathbf{H}\) given by \eqref{233} into the problem \eqref{222}, the objective function of our KAUFS method is formulated as:
\begin{equation}
\begin{aligned} 
& \arg \min _{\mathbf{W}, \mathbf{H}} -\frac{1}{2}\operatorname{Tr}\left(\mathbf{K}_{c}\mathbf{X}\mathbf{W}\mathbf{H}\mathbf{H}^{T}\mathbf{W}^{T}\mathbf{X}^{T}\right)\\
& \hphantom{\arg \min _{\mathbf{W}, \mathbf{H}}} + \frac{\alpha}{2} \left[\operatorname{Tr}\left(\mathbf{1}_{d\times d} \mathbf{W} \mathbf{W}^T\right)-\operatorname{Tr}\left(\mathbf{W} \mathbf{W}^T\right)\right]\\
& \hphantom{\arg \min _{\mathbf{W}, \mathbf{H}}} + \frac{\beta}{2} \left[\operatorname{Tr}\left(\mathbf{1}_{d\times d} \mathbf{H}^T \mathbf{H}\right)-\operatorname{Tr}\left(\mathbf{H}^T \mathbf{H}\right)\right]\\ 
& \quad \text { subject to } \mathbf{W} \geq 0, \quad \mathbf{H} \geq 0
\end{aligned}\label{234}
\end{equation}
where \(\alpha\) and \(\beta\) are two regularization parameters to balance these three terms.

As a summary of what is discussed in this section, on one hand, in order to enable the model to consider nonlinear relationships among features, we utilize kernel alignment instead of traditional entry-wise matrix norms such as Frobenius norm, \(l_{2,1}\) norm, etc., as a subspace learning distance. On the other hand, in order to tackle the sparsity of the feature weight matrix \(\mathbf{W}\) and characterize the correlation of features, we further apply inner product regularization to the feature weight matrix \(\mathbf{W}\) and the representation matrix \(\mathbf{H}\). The combination of the kernel alignment distance and the inner product regularization leads to a notable reduction in redundancy among the selected features.

\section{Algorithm and Convergence Analysis}\label{Algorithm and Convergence Analysis}
In this section, we propose an iterative update algorithm to solve the KAUFS model and analyze its convergence.

\subsection{Algorithm}

In order to solve the optimization problem \eqref{234}, we utilize the Lagrange multiplier method. For this purpose, we set Lagrange multipliers \(\mathbf{A} \in \mathbb{R}^{d\times k}\) and \(\mathbf{B} \in \mathbb{R}^{k\times d}\) for the constraints \(\mathbf{W} \ge 0\) and \(\mathbf{H} \ge 0\). We construct the Lagrange function as follows
\begin{equation}
\begin{aligned} 
\mathcal{L}\left(\mathbf{W},\mathbf{H},\mathbf{A},\mathbf{B}\right)=&-\frac{1}{2}\operatorname{Tr}\left(\mathbf{K}_{c}\mathbf{X}\mathbf{W}\mathbf{H}\mathbf{H}^{T}\mathbf{W}^{T}\mathbf{X}^{T}\right)\\
&+\frac{\alpha}{2} \left[\operatorname{Tr}\left(\mathbf{1}_{d \times d} \mathbf{W} \mathbf{W}^T\right)-\operatorname{Tr}\left(\mathbf{W} \mathbf{W}^T\right)\right]\\
&+\frac{\beta}{2} \left[\operatorname{Tr}\left(\mathbf{1}_{d \times d} \mathbf{H}^T \mathbf{H}\right)-\operatorname{Tr}\left(\mathbf{H}^T \mathbf{H}\right)\right]\\
&+\operatorname{Tr}\left(\mathbf{A}\mathbf{W}^T\right)+\operatorname{Tr}\left(\mathbf{B}\mathbf{H}^T\right).
\end{aligned}\nonumber
\end{equation}
Calculating the partial derivatives of  \(\mathcal{L}\) with respect to \(\mathbf{W}\) and \(\mathbf{H}\) respectively, we can obtain
\begin{equation}
\frac{\partial \mathcal{L}}{\partial \mathbf{W}}=-\mathbf{X}^T \mathbf{K}_c \mathbf{X}\mathbf{W}\mathbf{H}\mathbf{H}^T+\alpha \mathbf{1}_{d \times d}\mathbf{W}-\alpha\mathbf{W}+\mathbf{A}\label{311}
\end{equation}
and
\begin{equation}
\frac{\partial \mathcal{L}}{\partial \mathbf{H}}=-\mathbf{W}^T\mathbf{X}^T \mathbf{K}_c \mathbf{X}\mathbf{W}\mathbf{H}+\beta \mathbf{H}\mathbf{1}_{d \times d}-\beta\mathbf{H}+\mathbf{B}.\label{312}
\end{equation}

We noticed that equations \eqref{311} and \eqref{312} include the term \(\mathbf{X}^T \mathbf{K}_c\mathbf{X}\). If the data matrix \(\mathbf{X}\) or the centered kernel matrix \(\mathbf{K}_c\) contains negative values, this could result in the iterative update rules for \(\mathbf{W}\) and \(\mathbf{H}\) producing negative values, leading to \(\mathbf{W}\) and \(\mathbf{H}\) not satisfying the non-negative constraint. To address this issue, we employ the technique proposed by Ding et al. \cite{ding2008convex} to further refine equations \eqref{311} and \eqref{312}. That is, utilizing \(\mathbf{X}^T \mathbf{K}_c\mathbf{X}\) to construct two non-negative matrices, with each entry in these two matrices being
\begin{align}
\left(\mathbf{X}^T\mathbf{K}_c\mathbf{X}\right)_{ij}^+ &= \frac{\left|\left(\mathbf{X}^T\mathbf{K}_c\mathbf{X}\right)_{ij}\right| + \left(\mathbf{X}^T\mathbf{K}_c\mathbf{X}\right)_{ij}}{2},\label{313} \\
\left(\mathbf{X}^T\mathbf{K}_c\mathbf{X}\right)_{ij}^- &= \frac{\left|\left(\mathbf{X}^T\mathbf{K}_c\mathbf{X}\right)_{ij}\right| - \left(\mathbf{X}^T\mathbf{K}_c\mathbf{X}\right)_{ij}}{2}.\label{314}
\end{align}
It can be verified that \(\mathbf{X}^T\mathbf{K}_c\mathbf{X}=\left(\mathbf{X}^T\mathbf{K}_c\mathbf{X}\right)^+-\left(\mathbf{X}^T\mathbf{K}_c\mathbf{X}\right)^-\). Then, we have
\begin{equation}
\frac{\partial \mathcal{L}}{\partial \mathbf{W}}=-\left(\mathbf{X}^T \mathbf{K}_c \mathbf{X}\right)^{+}\mathbf{W}\mathbf{H}\mathbf{H}^T+\left(\mathbf{X}^T \mathbf{K}_c \mathbf{X}\right)^{-}\mathbf{W}\mathbf{H}\mathbf{H}^T+\alpha \mathbf{1}_{d \times d}\mathbf{W}-\alpha\mathbf{W}+\mathbf{A}\nonumber
\end{equation}
and
\begin{equation}
\frac{\partial \mathcal{L}}{\partial \mathbf{H}}=-\mathbf{W}^T\left(\mathbf{X}^T \mathbf{K}_c \mathbf{X}\right)^{+}\mathbf{W}\mathbf{H}+\mathbf{W}^T\left(\mathbf{X}^T \mathbf{K}_c \mathbf{X}\right)^{-}\mathbf{W}\mathbf{H}+\beta \mathbf{H}\mathbf{1}_{d \times d}-\beta\mathbf{H}+\mathbf{B}.\nonumber
\end{equation}
Finally, exploiting the Karush-Kuhn–Tucker conditions \(\mathbf{A}_{ij}\mathbf{W}_{ij}^{2}=0\) and \(\mathbf{B}_{ij}\mathbf{H}_{ij}^{2}=0\) leads to the following updating rules
\begin{equation}
\mathbf{W}_{i j} \gets \mathbf{W}_{i j} \sqrt{\frac{\left(\left(\mathbf{X}^T \mathbf{K}_c \mathbf{X}\right)^{+}\mathbf{W}\mathbf{H}\mathbf{H}^T +\alpha \mathbf{W}\right)_{i j}}{\left(\left(\mathbf{X}^T \mathbf{K}_c \mathbf{X}\right)^{-}\mathbf{W}\mathbf{H}\mathbf{H}^T+\alpha \mathbf{1}_{d \times d}\mathbf{W}\right)_{i j}}} \label{315}
\end{equation}
and
\begin{equation}
\mathbf{H}_{i j} \gets \mathbf{H}_{i j} \sqrt{\frac{\left(\mathbf{W}^T\left(\mathbf{X}^T \mathbf{K}_c \mathbf{X}\right)^{+}\mathbf{W}\mathbf{H} +\beta \mathbf{H}\right)_{i j}}{\left(\mathbf{W}^T\left(\mathbf{X}^T \mathbf{K}_c \mathbf{X}\right)^{-}\mathbf{W}\mathbf{H}+\beta \mathbf{H}\mathbf{1}_{d \times d}\right)_{i j}}}. \label{316}
\end{equation}

Based on the above analysis, the entire framework of the proposed KAUFS method is  summarized in Algorithm \ref{Algorithm KAUFS}.
\begin{algorithm}[H]
\caption{Kernel Alignment Unsupervised Feature Selection (KAUFS)}
\label{Algorithm KAUFS}
\begin{algorithmic}[1]
\State \textbf{Input} Data matrix $\mathbf{X} \in \mathbb{R}^{n\times d}$, the centered kernel matrix $\mathbf{K}_c \in \mathbb{R}^{n\times n}$, the number of selected features $k$ and two regularization parameters $\alpha$ and $\beta$;
\State Initialize $\mathbf{W} \in \mathbb{R}^{d\times k}$ and $\mathbf{H} \in \mathbb{R}^{k\times d}$;
\State Compute $\left(\mathbf{X}^T\mathbf{K}_c\mathbf{X}\right)^+$ and $\left(\mathbf{X}^T\mathbf{K}_c\mathbf{X}\right)^-$ using the rules \eqref{313} and \eqref{314};
\State \textbf{repeat}
\State \qquad Update $\mathbf{W}_{ij}$ by the rule \eqref{315};
\State \qquad Update $\mathbf{H}_{ij}$ by the rule \eqref{316};
\State \textbf{until} Convergence criterion has been satisfied;
\State \textbf{Output} Sort all $d$ features in descending order according to the value of $\left\|\mathbf{w}^{i}\right\|_{2}$, $i=1, 2,\dots, d$. The top $k$ features of $\mathbf{X}$ are selected based on these top $k$ highest $l_{2}$-norm values of $\mathbf{W}$ to form the optimal feature subset, and the corresponding row indexes are regarded as the output of the KAUFS algorithm.
\end{algorithmic}
\end{algorithm}
\subsection{Convergence Analysis}
In the following, we study the convergence of the proposed iterative update rules in Algorithm \ref{Algorithm KAUFS}.

\begin{theorem}
For \(\mathbf{W},\mathbf{H} \ge 0\), the values of the objective function given in \eqref{234} are non-increasing by employing the updating rules \eqref{315} and \eqref{316}.
\end{theorem}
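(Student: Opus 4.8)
The plan is to prove monotone decrease by the \emph{auxiliary function} technique of Lee and Seung, handling the two factors in alternation. Recall that $G(\mathbf{S},\mathbf{S}')$ is an auxiliary function for an objective $\mathcal{F}(\mathbf{S})$ provided $G(\mathbf{S},\mathbf{S}')\ge\mathcal{F}(\mathbf{S})$ for all $\mathbf{S},\mathbf{S}'$ with equality $G(\mathbf{S},\mathbf{S})=\mathcal{F}(\mathbf{S})$; then the iterate $\mathbf{S}^{t+1}=\arg\min_{\mathbf{S}}G(\mathbf{S},\mathbf{S}^t)$ satisfies $\mathcal{F}(\mathbf{S}^{t+1})\le G(\mathbf{S}^{t+1},\mathbf{S}^t)\le G(\mathbf{S}^t,\mathbf{S}^t)=\mathcal{F}(\mathbf{S}^t)$, so $\mathcal{F}$ is non-increasing. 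First I would freeze $\mathbf{H}$, regard the objective in \eqref{234} as a function $\mathcal{F}(\mathbf{W})$, and build an auxiliary function $G(\mathbf{W},\mathbf{W}')$ whose unconstrained minimizer over the nonnegative orthant reproduces exactly the rule \eqref{315}; I would then freeze $\mathbf{W}$ and repeat verbatim to obtain \eqref{316}. Chaining the two one-sided inequalities yields the theorem.

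The construction of $G$ rests on two term-wise estimates, and the decomposition $\mathbf{X}^T\mathbf{K}_c\mathbf{X}=(\mathbf{X}^T\mathbf{K}_c\mathbf{X})^+-(\mathbf{X}^T\mathbf{K}_c\mathbf{X})^-$ from \eqref{313}--\eqref{314} is precisely what makes both estimates applicable, since it replaces the indefinite matrix $\mathbf{X}^T\mathbf{K}_c\mathbf{X}$ by two \emph{entrywise nonnegative} pieces. For the terms of $\mathcal{F}(\mathbf{W})$ that furnish the \emph{denominator} of \eqref{315}, namely $\tfrac12\operatorname{Tr}\!\bigl((\mathbf{X}^T\mathbf{K}_c\mathbf{X})^-\mathbf{W}\mathbf{H}\mathbf{H}^T\mathbf{W}^T\bigr)$ and $\tfrac{\alpha}{2}\operatorname{Tr}\!\bigl(\mathbf{1}_{d\times d}\mathbf{W}\mathbf{W}^T\bigr)$, I would apply the quadratic upper bound $\operatorname{Tr}(\mathbf{S}^T\mathbf{A}\mathbf{S}\mathbf{B})\le\sum_{ij}(\mathbf{A}\mathbf{S}'\mathbf{B})_{ij}\mathbf{S}_{ij}^2/\mathbf{S}'_{ij}$, valid for entrywise nonnegative $\mathbf{A},\mathbf{B},\mathbf{S},\mathbf{S}'$ (the coefficient matrices $(\mathbf{X}^T\mathbf{K}_c\mathbf{X})^{\pm}$, $\mathbf{1}_{d\times d}$, and $\mathbf{H}\mathbf{H}^T$ are all entrywise nonnegative under the constraints $\mathbf{W},\mathbf{H}\ge 0$). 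For the terms entering with a negative sign, namely $-\tfrac12\operatorname{Tr}\!\bigl((\mathbf{X}^T\mathbf{K}_c\mathbf{X})^+\mathbf{W}\mathbf{H}\mathbf{H}^T\mathbf{W}^T\bigr)$ and $-\tfrac{\alpha}{2}\|\mathbf{W}\|_F^2$, which furnish the \emph{numerator}, I would lower bound the underlying quadratic forms via the concavity inequality $z\ge 1+\log z$, e.g. $\mathbf{W}_{ij}^2\ge (\mathbf{W}'_{ij})^2\bigl(1+\log(\mathbf{W}_{ij}^2/(\mathbf{W}'_{ij})^2)\bigr)$, so that after negation these become valid upper bounds. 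Summing all term-wise bounds produces $G(\mathbf{W},\mathbf{W}')$, and equality at $\mathbf{W}=\mathbf{W}'$ holds for every term by design.

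I would then impose $\partial G/\partial\mathbf{W}_{ij}=0$. Since $G$ is separable in the entries $\mathbf{W}_{ij}$ and consists, in each entry, of a $\mathbf{W}_{ij}^2$ contribution (from the quadratic bounds) and a $\log\mathbf{W}_{ij}^2$ contribution (from the log bounds), the stationarity condition is algebraic and solves to $\mathbf{W}_{ij}=\mathbf{W}'_{ij}\sqrt{\nabla^-_{ij}/\nabla^+_{ij}}$, where $\nabla^+$ collects the denominator factors $(\mathbf{X}^T\mathbf{K}_c\mathbf{X})^-\mathbf{W}'\mathbf{H}\mathbf{H}^T+\alpha\mathbf{1}_{d\times d}\mathbf{W}'$ and $\nabla^-$ the numerator factors $(\mathbf{X}^T\mathbf{K}_c\mathbf{X})^+\mathbf{W}'\mathbf{H}\mathbf{H}^T+\alpha\mathbf{W}'$; this is exactly \eqref{315}. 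A second-derivative check confirms the critical point is the global minimizer of $G$. The argument for $\mathbf{H}$ is identical, with $\mathbf{W}^T(\mathbf{X}^T\mathbf{K}_c\mathbf{X})^{\pm}\mathbf{W}$, $\mathbf{1}_{d\times d}$ acting on the right, and $\beta$ in the roles of the coefficient matrices, recovering \eqref{316}.

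The main obstacle is the sign structure of the objective: unlike classical NMF, where the data-fidelity term is a positive quadratic, here the alignment term enters with a minus sign and the two regularizers contribute both positive ($\mathbf{1}_{d\times d}$) and negative ($-\|\cdot\|_F^2$) quadratic pieces, so no single bound suffices. The delicate part is to route each term to the correctly-signed estimate, quadratic upper bounds for the entrywise-nonnegative, denominator-forming terms and logarithmic lower bounds for the negative, numerator-forming terms, so that tightness at $\mathbf{W}=\mathbf{W}'$ holds \emph{simultaneously} and the minimizer of $G$ collapses to the stated multiplicative rule. Establishing the quadratic upper bound itself, which follows from a sum-of-squares (AM--GM) argument for entrywise nonnegative matrices in the spirit of Ding et al.\ \cite{ding2008convex}, is the one genuinely nontrivial estimate; the remainder is bookkeeping of the matrix derivatives.
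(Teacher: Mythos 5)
Your proposal is correct and follows essentially the same route as the paper's proof: fix one factor, build a Lee--Seung auxiliary function by applying the quadratic upper bound of Ding et al.\ to the $\left(\mathbf{X}^T\mathbf{K}_c\mathbf{X}\right)^-$ and $\mathbf{1}_{d\times d}$ terms and the $z\ge 1+\log z$ lower bound to the $\left(\mathbf{X}^T\mathbf{K}_c\mathbf{X}\right)^+$ and $-\operatorname{Tr}\left(\mathbf{W}\mathbf{W}^T\right)$ terms, then solve the entrywise stationarity condition to recover exactly the multiplicative updates \eqref{315} and \eqref{316}, and repeat for $\mathbf{H}$. The only cosmetic differences are that the paper invokes Proposition 6 of Ding et al.\ (2006), whose hypothesis also includes symmetry of the coefficient matrices (satisfied here), rather than mere entrywise nonnegativity as you state, and that the paper does not carry out your (harmless, but also not strictly necessary) second-derivative check.
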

\begin{proof}
The objective function of the KAUFS method can be written as
\begin{equation}
\begin{aligned} 
\mathcal{J}\left(\mathbf{W},\mathbf{H}\right)=&-\frac{1}{2}\operatorname{Tr}\left(\mathbf{K}_{c}\mathbf{X}\mathbf{W}\mathbf{H}\mathbf{H}^{T}\mathbf{W}^{T}\mathbf{X}^{T}\right)\\
&+\frac{\alpha}{2} \left[\operatorname{Tr}\left(\mathbf{1}_{d \times d} \mathbf{W} \mathbf{W}^T\right)-\operatorname{Tr}\left(\mathbf{W} \mathbf{W}^T\right)\right]\\
&+\frac{\beta}{2} \left[\operatorname{Tr}\left(\mathbf{1}_{d \times d} \mathbf{H}^T \mathbf{H}\right)-\operatorname{Tr}\left(\mathbf{H}^T \mathbf{H}\right)\right].
\end{aligned}\label{321}
\end{equation}

Proving this theorem consists of two steps. In the first step, we initially fix the matrix \(\mathbf{H}\) and look for an appropriate auxiliary function \cite{lee2000algorithms} for \(\mathcal{J}\left(\mathbf{W}\right)\) according to the equation \eqref{321}. Subsequently, we examine the descending behavior of \(\mathcal{J}\left(\mathbf{W}\right)\) utilizing the proposed auxiliary function. In the second step, we keep the matrix \(\mathbf{W}\) fixed and perform a similar analysis on the updating formula for \(\mathbf{H}\).

Let the matrix \(\mathbf{H}\) be fixed. By neglecting terms in equation \eqref{321} that remain constant or only depend on the matrix \(\mathbf{H}\), we can reformulate the function \(\mathcal{J}\left(\mathbf{W},\mathbf{H}\right)\) into the following form
\begin{equation}
\mathcal{J}\left(\mathbf{W}\right)=-\frac{1}{2}\operatorname{Tr}\left(\mathbf{K}_{c}\mathbf{X}\mathbf{W}\mathbf{H}\mathbf{H}^{T}\mathbf{W}^{T}\mathbf{X}^{T}\right) +\frac{\alpha}{2} \left[\operatorname{Tr}\left(\mathbf{1}_{d \times d} \mathbf{W} \mathbf{W}^T\right)-\operatorname{Tr}\left(\mathbf{W} \mathbf{W}^T\right)\right].
\nonumber
\end{equation}
Let \(\mathbf{A}=\mathbf{X}^T \mathbf{K}_c \mathbf{X}\) and \(\mathbf{B}=\mathbf{H}\mathbf{H}^{T}\). Utilizing the trace cyclic property and formulas \eqref{313} and \eqref{314}. The function \(\mathcal{J}\left(\mathbf{W}\right)\) can be further converted into
\begin{equation}
\begin{aligned}
\mathcal{J}\left(\mathbf{W}\right)=&-\frac{1}{2}\operatorname{Tr}\left(\mathbf{A}^{+}\mathbf{W}\mathbf{B}\mathbf{W}^{T}\right)+\frac{1}{2}\operatorname{Tr}\left(\mathbf{A}^{-}\mathbf{W}\mathbf{B}\mathbf{W}^{T}\right)\\
&+\frac{\alpha}{2} \operatorname{Tr}\left(\mathbf{1}_{d \times d} \mathbf{W} \mathbf{W}^T\right)-\frac{\alpha}{2}\operatorname{Tr}\left(\mathbf{W} \mathbf{W}^T\right).
\end{aligned}\nonumber
\end{equation}

Let us define the following function
\begin{equation}
\begin{aligned}
\mathcal{G}\left(\mathbf{W},\mathbf{W}'\right)=&-\frac{1}{2} \sum_{i=1}^{d}\sum_{j=1}^{d}\sum_{m=1}^{k}\sum_{n=1}^{k}\mathbf{A}^{+}_{ij}\mathbf{W}'_{jm}\mathbf{B}_{mn}\mathbf{W}'_{in}\left(1+\log\frac{\mathbf{W}_{jm}\mathbf{W}_{in}}{\mathbf{W}'_{jm}\mathbf{W}'_{in}} \right) \\
&+\frac{1}{2}\sum_{i=1}^{d}\sum_{j=1}^{k}\frac{\left(\mathbf{A}^{-}\mathbf{W}'\mathbf{B}\right)_{ij}\mathbf{W}^{2}_{ij}}{\mathbf{W}'_{ij}}\\
&+\frac{\alpha}{2}\sum_{i=1}^{d}\sum_{j=1}^{k}\frac{\left(\mathbf{1}_{d\times d}\mathbf{W'}\right)_{ij}\mathbf{W}^{2}_{ij}}{\mathbf{W}'_{ij}}\\
&-\frac{\alpha}{2}\sum_{i=1}^{d}\sum_{j=1}^{k}\mathbf{W'}^{2}_{ij}\left(1+\log \frac{\mathbf{W}^{2}_{ij}}{\mathbf{W'}^{2}_{ij}} \right),
\end{aligned}\nonumber
\end{equation}
where \(\mathbf{W}'\) is a non-negative matrix in \(\mathbb{R}^{d\times k}\). Our aim  is to demonstrate that \(\mathcal{G}\left(\mathbf{W},\mathbf{W}'\right)\) is an auxiliary function for \(\mathcal{J}\left(\mathbf{W}\right)\).  At first, it can be verified that when \(\mathbf{W}'=\mathbf{W}\), we have
\begin{equation}
\begin{aligned}
\mathcal{G}\left(\mathbf{W},\mathbf{W}\right)=&-\frac{1}{2} \sum_{i=1}^{d}\sum_{j=1}^{d}\sum_{m=1}^{k}\sum_{n=1}^{k}\mathbf{A}^{+}_{ij}\mathbf{W}_{jm}\mathbf{B}_{mn}\mathbf{W}_{in}\\
&+\frac{1}{2}\sum_{i=1}^{d}\sum_{j=1}^{k}\left(\mathbf{A}^{-}\mathbf{W}\mathbf{B}\right)_{ij}\mathbf{W}_{ij}\\
&+\frac{\alpha}{2}\sum_{i=1}^{d}\sum_{j=1}^{k}\left(\mathbf{1}_{d\times d}\mathbf{W}\right)_{ij}\mathbf{W}_{ij}\\
&-\frac{\alpha}{2}\sum_{i=1}^{d}\sum_{j=1}^{k}\mathbf{W}^{2}_{ij}\\
=&-\frac{1}{2}\operatorname{Tr}\left(\mathbf{A}^{+}\mathbf{W}\mathbf{B}\mathbf{W}^{T}\right)+\frac{1}{2}\operatorname{Tr}\left(\mathbf{A}^{-}\mathbf{W}\mathbf{B}\mathbf{W}^{T}\right)\\
&+\frac{\alpha}{2} \operatorname{Tr}\left(\mathbf{1}_{d \times d} \mathbf{W} \mathbf{W}^T\right)-\frac{\alpha}{2}\operatorname{Tr}\left(\mathbf{W} \mathbf{W}^T\right),
\end{aligned}\nonumber
\end{equation}
which implies that \(\mathcal{G}\left(\mathbf{W},\mathbf{W}\right)=\mathcal{J}\left(\mathbf{W}\right)\). Then, we use the inequality \(z \ge 1+\log z\), which holds for any \(z \ge 0\), and  obtain
\begin{equation}
\begin{aligned}
\operatorname{Tr}\left(\mathbf{A}^{+}\mathbf{W}\mathbf{B}\mathbf{W}^{T}\right)&=\sum_{i=1}^{d}\sum_{j=1}^{d}\sum_{m=1}^{k}\sum_{n=1}^{k}\mathbf{A}^{+}_{ij}\mathbf{W}_{jm}\mathbf{B}_{mn}\mathbf{W}_{in}\\
&\ge \sum_{i=1}^{d}\sum_{j=1}^{d}\sum_{m=1}^{k}\sum_{n=1}^{k}\mathbf{A}^{+}_{ij}\mathbf{W}'_{jm}\mathbf{B}_{mn}\mathbf{W}'_{in}\left(1+\log\frac{\mathbf{W}_{jm}\mathbf{W}_{in}}{\mathbf{W}'_{jm}\mathbf{W}'_{in}} \right)
\end{aligned}\label{322}
\end{equation}
and
\begin{equation}
\begin{aligned}
\operatorname{Tr}\left(\mathbf{W} \mathbf{W}^T\right)&=\sum_{i=1}^{d}\sum_{j=1}^{k}\mathbf{W}^{2}_{ij}\\
&\ge \sum_{i=1}^{d}\sum_{j=1}^{k}\mathbf{W'}^{2}_{ij}\left(1+\log \frac{\mathbf{W}^{2}_{ij}}{\mathbf{W'}^{2}_{ij}} \right).
\end{aligned}\label{323}
\end{equation}
Moreover, by utilizing the Proposition 6 in \cite{ding2006orthogonal}, the following inequalities can be derived
\begin{equation}
\operatorname{Tr}\left(\mathbf{A}^{-}\mathbf{W}\mathbf{B}\mathbf{W}^{T}\right) \le \sum_{i=1}^{d}\sum_{j=1}^{k}\frac{\left(\mathbf{A}^{-}\mathbf{W}'\mathbf{B}\right)_{ij}\mathbf{W}^{2}_{ij}}{\mathbf{W}'_{ij}} \label{324}
\end{equation}
and
\begin{equation}
\operatorname{Tr}\left(\mathbf{1}_{d \times d} \mathbf{W} \mathbf{W}^T\right) \le \sum_{j=1}^{k}\frac{\left(\mathbf{1}_{d\times d}\mathbf{W'}\right)_{ij}\mathbf{W}^{2}_{ij}}{\mathbf{W}'_{ij}}. \label{325}
\end{equation}
Combining the relations \eqref{322}, \eqref{323}, \eqref{324} and \eqref{325} yields that \(\mathcal{G}\left(\mathbf{W},\mathbf{W'}\right) \ge \mathcal{J}\left(\mathbf{W}\right)\).

Consequently, by considering relations \(\mathcal{G}\left(\mathbf{W},\mathbf{W}\right)=\mathcal{J}\left(\mathbf{W}\right)\) and \(\mathcal{G}\left(\mathbf{W},\mathbf{W'}\right) \ge \mathcal{J}\left(\mathbf{W}\right)\), we conclude that \(\mathcal{G}\left(\mathbf{W},\mathbf{W'}\right)\) serves as an auxiliary function for \(\mathcal{J}\left(\mathbf{W}\right)\), which means that \(\mathcal{J}\left(\mathbf{W}\right)\) is non-increasing by the update rule \(\arg \min _{\mathbf{W}} \mathcal{G}\left(\mathbf{W},\mathbf{W'}\right)\). In order to determine the minima of \(\mathcal{G}\left(\mathbf{W},\mathbf{W'}\right)\), by taking the derivatives of \(\mathcal{G}\) with respect to \(\mathbf{W}_{ij}\), it turns out that
\begin{equation}
\begin{aligned}
\frac{\partial \mathcal{G}\left(\mathbf{W},\mathbf{W'}\right)}{\partial \mathbf{W}_{ij}}=&-\left(\mathbf{A}^{+}\mathbf{W'}\mathbf{B}\right)_{ij}\frac{\mathbf{W'}_{ij}}{\mathbf{W}_{ij}} +\left(\mathbf{A}^{-}\mathbf{W'}\mathbf{B}\right)_{ij}\frac{\mathbf{W}_{ij}}{\mathbf{W'}_{ij}}\\
&+\alpha \left(\mathbf{1}_{d\times d}\mathbf{W'}\right)_{ij}\frac{\mathbf{W}_{ij}}{\mathbf{W'}_{ij}}-\alpha \mathbf{W'}_{ij}\frac{\mathbf{W'}_{ij}}{\mathbf{W}_{ij}}.
\end{aligned}\nonumber
\end{equation}
By setting \(\partial \mathcal{G}\left(\mathbf{W},\mathbf{W'}\right) / \partial \mathbf{W}_{ij}\) to zero, and substituting \(\left(\mathbf{X}^T \mathbf{K}_c \mathbf{X}\right)^{+}\), \(\left(\mathbf{X}^T \mathbf{K}_c \mathbf{X}\right)^{-}\) and \(\mathbf{H}\mathbf{H}^{T}\) into \(\mathbf{A}^{+}\), \(\mathbf{A}^{-}\) and \(\mathbf{B}\) respectively, we can derive
\begin{equation}
\mathbf{W}_{i j} = \mathbf{W'}_{i j} \sqrt{\frac{\left(\left(\mathbf{X}^T \mathbf{K}_c \mathbf{X}\right)^{+}\mathbf{W'}\mathbf{H}\mathbf{H}^T +\alpha \mathbf{W'}\right)_{i j}}{\left(\left(\mathbf{X}^T \mathbf{K}_c \mathbf{X}\right)^{-}\mathbf{W'}\mathbf{H}\mathbf{H}^T+\alpha \mathbf{1}_{d \times d}\mathbf{W'}\right)_{i j}}}.\nonumber
\end{equation}
This implies that when the matrix \(\mathbf{H}\) is fixed, the objective function of the KAUFS method exhibits non-increasing behavior with respect to the updating formula given by equation \eqref{315}.

The rest of the proof regarding the decreasing behavior of the objective function of KAUFS, with respect to the updating formula \eqref{316}, follows a similar process as described above, and its explanation is omitted.

\end{proof}

\section{Mutiple Kernel Method for KAUFS}\label{Mutiple Kernel Method for KAUFS}

We can apply the KAUFS model to learn the similarity information among data samples, thereby obtaining a discriminative subset of features. However, the performance of a kernel learning model typically depends on the choice of the kernel function, and the optimal kernel is typically unknown and computationally challenging to identify. Additionally, in cases where sample features contain heterogeneous information \cite{pavlidis2001gene, mariette2018unsupervised}, the sample size is large \cite{sonnenburg2006large, rakotomamonjy2007more}, or samples are unevenly distributed in a high-dimensional feature space \cite{zheng2006non}, using a single kernel function to map all samples may not be appropriate.

Multiple kernel learning has been proposed to address the above issues. It constructs several candidate kernel matrices and combines them into a consensus kernel matrix \cite{gonen2011multiple}. This consensus kernel matrix helps accurately characterize the internal structure of the dataset and can achieve superior performance compared to a single kernel matrix or a combination of different kernels with different parameters for the same kernel function. Moreover, the consensus kernel matrix demonstrate enhanced robustness, minimizing the impact of noise or outliers and enhancing the stability of the model \cite{kang2017kernel}.

In this section, we extend the KAUFS model to a multiple kernel model and propose a corresponding solving algorithm. By incorporating a sufficient number and diversity of kernel functions, the algorithm can jointly learns the optimal convex combination of kernel matrices along with the feature weight matrix \(\mathbf{W}\) and the representation matrix \(\mathbf{H}\). At the end of this section, we provide the computational complexity analysis of the KAUFS algorithm and the MKAUFS algorithm respectively.

\subsection{MKAUFS Model}

Given a dataset \(\mathbf{X} \in \mathbb{R}^{n\times d}\) with \(n\) samples, each having \(d\) features, we suppose there are \(N\) different kernel functions \(\left \{ k_{i} \right \}_{i=1}^{N}\) available for the unsupervised feature selection task. Accordingly, there are \(N\) different kernel spaces. Typically, the most suitable kernel space is unknown. An intuitive way to use them is to concatenate all by concatenating all feature spaces into an augmented Hilbert space and associating each feature space with a relevance weight \(\eta_{i}\), \(i=1, 2, \dots, N\). More specifically, we apply these \(N\) kernel functions to the dataset, generating \(N\) different kernel matrices, and centralize them using \eqref{223}. Subsequently, we consider a convex combination of these centered kernel matrices \(\left \{ K_{c}^{(i)} \right \}_{i=1}^{N}\), i.e.
\begin{equation}
\mathbf {K}=\sum_{i=1}^{N}\eta_{i}\mathbf{K}_{c}^{(i)}, \quad \text { subject to } \sum_{i=1}^{N}\eta_{i}=1, \quad\eta_{i} \geq 0.\nonumber
\end{equation}
We substitute this relation into \eqref{234} to obtain the objective function for Multiple Kernel Alignment Unsupervised Feature Selection (MKAUFS):
\begin{equation}
\begin{aligned} 
& \arg \min _{\mathbf{W}, \mathbf{H}} -\frac{1}{2}\operatorname{Tr}\left(\sum_{i=1}^{N}\eta_{i}\mathbf{K}_{c}^{(i)}\mathbf{X}\mathbf{W}\mathbf{H}\mathbf{H}^{T}\mathbf{W}^{T}\mathbf{X}^{T}\right)\\
& \hphantom{\arg \min _{\mathbf{W}, \mathbf{H}}} + \frac{\alpha}{2} \left[\operatorname{Tr}\left(\mathbf{1}_{d\times d} \mathbf{W} \mathbf{W}^T\right)-\operatorname{Tr}\left(\mathbf{W} \mathbf{W}^T\right)\right]\\
& \hphantom{\arg \min _{\mathbf{W}, \mathbf{H}}} + \frac{\beta}{2} \left[\operatorname{Tr}\left(\mathbf{1}_{d\times d} \mathbf{H}^T \mathbf{H}\right)-\operatorname{Tr}\left(\mathbf{H}^T \mathbf{H}\right)\right]\\
& \hphantom{\arg \min _{\mathbf{W}, \mathbf{H}}} + \frac{\gamma}{2}\left \| \mathbf{\eta} \right \|_{2}^{2}  \\ 
& \quad \text { subject to } \mathbf{W} \geq 0, \quad \mathbf{H} \geq 0, \quad \sum_{i=1}^{N}\eta_{i}=1, \quad \eta_{i} \geq 0.
\end{aligned}\label{411}
\end{equation}
We denote the kernel weights as a vector \(\mathbf{\eta}=\left [\eta_{1},\dots, \eta_{N} \right ]^{T} \) and further impose the \(l_{2}\)-norm to it as the regularization term \(\left \| \mathbf{\eta} \right \|_{2}^{2}\) to prevent the weights from overfitting to one kernel.
\subsection{Algorithm}

In this subsection, we propose an algorithm to solve \eqref{411}. Similar to the KAUFS method, we also adopt an iterative strategy to alternately optimize \(\left(\mathbf{W}, \mathbf{H}\right)\) and \(\mathbf{\eta}\), while holding the other variable as constant.

\textbf{(1) Optimizing with respect to \(\left(\mathbf{W}, \mathbf{H}\right)\) when \(\mathbf{\eta}\) is fixed.} The optimization problem \eqref{411} is reduced to
\begin{equation}
\begin{aligned} 
& \arg \min _{\mathbf{W}, \mathbf{H}} -\frac{1}{2}\operatorname{Tr}\left(\mathbf{K}\mathbf{X}\mathbf{W}\mathbf{H}\mathbf{H}^{T}\mathbf{W}^{T}\mathbf{X}^{T}\right)\\
& \hphantom{\arg \min _{\mathbf{W}, \mathbf{H}}} + \frac{\alpha}{2} \left[\operatorname{Tr}\left(\mathbf{1}_{d\times d} \mathbf{W} \mathbf{W}^T\right)-\operatorname{Tr}\left(\mathbf{W} \mathbf{W}^T\right)\right]\\
& \hphantom{\arg \min _{\mathbf{W}, \mathbf{H}}} + \frac{\beta}{2} \left[\operatorname{Tr}\left(\mathbf{1}_{d\times d} \mathbf{H}^T \mathbf{H}\right)-\operatorname{Tr}\left(\mathbf{H}^T \mathbf{H}\right)\right]\\ 
& \quad \text { subject to } \mathbf{W} \geq 0, \quad \mathbf{H} \geq 0
\end{aligned}\nonumber
\end{equation}
where \(\mathbf {K}=\sum_{i=1}^{N}\eta_{i}\mathbf{K}_{c}^{(i)}\). Similar to the optimization of \(\mathbf{W}\) and \(\mathbf{H}\) of KAUFS method, we have following rules to update \(\mathbf{W}\) and \(\mathbf{H}\) respectively.
\begin{equation}
\mathbf{W}_{i j} \gets \mathbf{W}_{i j} \sqrt{\frac{\left(\left(\mathbf{X}^T \mathbf{K} \mathbf{X}\right)^{+}\mathbf{W}\mathbf{H}\mathbf{H}^T +\alpha \mathbf{W}\right)_{i j}}{\left(\left(\mathbf{X}^T \mathbf{K} \mathbf{X}\right)^{-}\mathbf{W}\mathbf{H}\mathbf{H}^T+\alpha \mathbf{1}_{d \times d}\mathbf{W}\right)_{i j}}} \label{421}
\end{equation}
and
\begin{equation}
\mathbf{H}_{i j} \gets \mathbf{H}_{i j} \sqrt{\frac{\left(\mathbf{W}^T\left(\mathbf{X}^T \mathbf{K} \mathbf{X}\right)^{+}\mathbf{W}\mathbf{H} +\beta \mathbf{H}\right)_{i j}}{\left(\mathbf{W}^T\left(\mathbf{X}^T \mathbf{K} \mathbf{X}\right)^{-}\mathbf{W}\mathbf{H}+\beta \mathbf{H}\mathbf{1}_{d \times d}\right)_{i j}}}. \label{422}
\end{equation}

\textbf{(2) Optimizing with respect to \(\mathbf{\eta}\) when \(\mathbf{W}\) and \(\mathbf{H}\) are fixed.} Removing the  irrelevant terms, the optimization problem \eqref{411} becomes
\begin{equation}
\begin{aligned} 
& \arg \min _{\mathbf{\eta}} -\frac{1}{2}\operatorname{Tr}\left(\sum_{i=1}^{N}\eta_{i}\mathbf{K}_{c}^{(i)}\mathbf{X}\mathbf{W}\mathbf{H}\mathbf{H}^{T}\mathbf{W}^{T}\mathbf{X}^{T}\right) + \frac{\gamma}{2}\left \| \mathbf{\eta} \right \|_{2}^{2}=  -\frac{1}{2}\sum_{i=1}^{N}\eta_{i}f_{i} + \frac{\gamma}{2}\sum_{i=1}^{N}\eta_{i}^{2}\\ 
& \quad \text { subject to } \sum_{i=1}^{N}\eta_{i}=1, \quad \eta_{i} \geq 0
\end{aligned},\label{423}
\end{equation}
where 
\begin{equation}
f_{i}=\operatorname{Tr}\left(\mathbf{K}_{c}^{(i)}\mathbf{X}\mathbf{W}\mathbf{H}\mathbf{H}^{T}\mathbf{W}^{T}\mathbf{X}^{T}\right).\label{424}
\end{equation}

The optimization of \eqref{423} with respect to the kernel weights \(\mathbf{\eta}\) could be solved as a standard quadratic programming (QP) problem. In Algorithm \ref{Algorithm MKAUFS}, we provide a complete algorithm for solving the problem \eqref{411}.
\begin{algorithm}[H]
\caption{Multiple Kernel Alignment Unsupervised Feature Selection (MKAUFS)}
\label{Algorithm MKAUFS}
\begin{algorithmic}[1]
\State \textbf{Input} Data matrix $\mathbf{X} \in \mathbb{R}^{n\times d}$, A set of centered kernel matrices $\left \{ K_{c}^{(i)} \right \}_{i=1}^{N}$, the number of selected features $k$ and three regularization parameters $\alpha$, $\beta$ and $\gamma$;
\State Initialize $\mathbf{W} \in \mathbb{R}^{d\times k}$ and $\mathbf{H} \in \mathbb{R}^{k\times d}$;
\State Initialize the kernel weights as $\eta_{i}=\frac{1}{N}$, $i=1,2,\dots, N$;
\State \textbf{repeat}
\State \qquad Compute the combined kernel $\mathbf {K}=\sum_{i=1}^{N}\eta_{i}\mathbf{K}_{c}^{(i)}$;
\State \qquad Compute $\left(\mathbf{X}^T\mathbf{K}\mathbf{X}\right)^+$ and $\left(\mathbf{X}^T\mathbf{K}\mathbf{X}\right)^-$ using the rules \eqref{313} and \eqref{314};
\State \qquad Update $\mathbf{W}_{ij}$ by the rule \eqref{421};
\State \qquad Update $\mathbf{H}_{ij}$ by the rule \eqref{422};
\State \qquad Compute $f_{i}$ for $i=1,2,\dots, N$ by the rule \eqref{424};
\State \qquad Update the kernel weights \(\mathbf{\eta}\) by \eqref{423};
\State \textbf{until} Convergence criterion has been satisfied;
\State \textbf{Output} Sort all $d$ features in descending order according to the value of $\left\|\mathbf{w}^{i}\right\|_{2}$, $i=1, 2,\dots, d$. The top $k$ features of $\mathbf{X}$ are selected based on these top $k$ highest $l_{2}$-norm values of $\mathbf{W}$ to form the optimal feature subset, and the corresponding row indexes are regarded as the output of the MKAUFS algorithm.
\end{algorithmic}
\end{algorithm}
\subsection{Computational Complexity}
The computational complexity of KAUFS and MKAUFS is illustrated below. We first analyze the computational complexity of the KAUFS method. The primary computation cost of the update rule \eqref{315} for \(\mathbf{W}\) is the calculation of \(\left(\mathbf{X}^T \mathbf{K}_c \mathbf{X}\right)\mathbf{W}\mathbf{H}\mathbf{H}^T\). It's worth noting that here we don't need to differentiate between \(\left(\mathbf{X}^T \mathbf{K}_c \mathbf{X}\right)^{+}\) and \(\left(\mathbf{X}^T \mathbf{K}_c \mathbf{X}\right)^{-}\), since the major computation cost in equations \eqref{313} and \eqref{314} is \(\left(\mathbf{X}^T \mathbf{K}_c \mathbf{X}\right)\). Assuming \(k, n \ll d\), we can obtain \(\left(\mathbf{X}^T \mathbf{K}_c \mathbf{X}\right)\mathbf{W}\mathbf{H}\mathbf{H}^T\) by first computing \(\mathbf{X}^T \mathbf{K}_c\) and \(\mathbf{H}\mathbf{H}^T\), then \(\mathbf{X}^T \mathbf{K}_c \mathbf{X}\) and \(\mathbf{W}\mathbf{H}\mathbf{H}^T\), and finally left-multiplying \(\mathbf{X}^T \mathbf{K}_c \mathbf{X}\) to \(\mathbf{W}\mathbf{H}\mathbf{H}^T\). This way, it takes \( 2dn^{2} + 4dk^{2} + 2nd^{2} + 2kd^{2} \) operations. Similarly, the cost of updating \(\mathbf{H}\) is mainly allocated to calculating \(\mathbf{W}^T\left(\mathbf{X}^T \mathbf{K}_c \mathbf{X}\right)\mathbf{W}\mathbf{H}\), which takes \( 2dn^{2} + 4kd^{2} + 2nd^{2} + 2dk^{2} \) operations. Consequently, the computational complexity for each iteration of Algorithm \ref{Algorithm KAUFS} is almost equal to \(O\left(d^{2}\right)\).

Next, we analyze the computational complexity of the MKAUFS method. The major computational cost of Algorithm \ref{Algorithm MKAUFS} are the updating rules \eqref{421}, \eqref{422}, and \eqref{424}. Assuming \(k, n, N \ll d\), the number of operations for updating rules \eqref{421} and \eqref{422} align with the conclusions drawn for Algorithm \ref{Algorithm KAUFS}. As for the equation \eqref{424}, we determine the optimal sequence of matrix multiplication. Firstly, we compute \(\mathbf{X}\mathbf{W}\), \(\mathbf{H}\mathbf{H}^T\), and \(\mathbf{W}^T\mathbf{X}^{T}\), then \(\mathbf{K}_{c}^{(i)}\mathbf{X}\mathbf{W}\) and \(\mathbf{H}\mathbf{H}^T\mathbf{W}^T\mathbf{X}^{T}\), and finally \(\mathbf{K}_{c}^{(i)}\mathbf{X}\mathbf{W}\mathbf{H}\mathbf{H}^{T}\mathbf{W}^{T}\mathbf{X}^{T}\). For \( N \) kernels, equation \eqref{424} requires \( N \) iterations, each taking \( 2dk^{2} + 4kdn + 4kn^{2} + 2nk^{2} \) operations, which is fewer than updating rules \eqref{421} and \eqref{422}. Thus, the computational complexity for each iteration of Algorithm \ref{Algorithm MKAUFS} is almost equal to \(O\left(d^{2}\right)\).

\section{Numerical Experiments}\label{Numerical Experiments}

In this section, we conduct experiments to demonstrate the effectiveness of our proposed algorithms in clustering applications using a variety of real-world datasets. Detailed information about the datasets utilized in this paper, the experimental settings, and the performance comparison of KAUFS and MKAUFS with other classic and state-of-the-art UFS methods are provided. Specifically, the comparison framework proceeds through three key steps: Firstly, the UFS algorithm is applied to the input dataset to select a number of representative features. Subsequently, the set of selected features and the desired number of classes are given as input to the k-means clustering algorithm to produce a vector containing predicted class labels for each sample. Finally, the performance of all UFS methods is assessed by computing three evaluation metrics using the predicted labels, true labels, and the selected features.

\subsection{Datasets}

Utilizing datasets from diverse fields provides a robust platform for assessing the effectiveness of different UFS methods. In this study, we used eight datasets including images, artificial data, and gene expression data, each detailed in Table \ref{Summary of the benchmark datasets}, to conduct computational experiments. Specifically, to assess performance on large-scale applied datasets, we utilized a high-dimensional face dataset (orlraws). These datasets are available in the scikit-feature selection repository \cite{li2017feature}.

\begin{table}[H]\centering
\caption {Summary of the benchmark datasets.}\label{Summary of the benchmark datasets}
\vspace{5pt}
\begin{tabular}{l|c|c|c|l}
\hline
Dataset & \# of Instances & \# of Features & \# of Classes & Data types\\
\hline
WarpAR & 130 & 2400 & 10 & Face image\\
WarpPIE & 210 & 2420 & 10 & Face image\\
Yale64 & 165 & 4096 & 15 & Face image\\
Orlraws & 100 & 10,304 & 10 & Face image\\
Madelon & 2600 & 500 & 2 & Artificial feature\\
GLIOMA & 50 & 4434 & 4 & Gene expression\\
TOX\_171 & 171 & 5748 & 4 & Gene expression\\
Prostate\_GE & 102 & 5966 & 2 & Gene expression\\
\hline
\end{tabular}
\end{table}
\subsection{Comparison Methods}
To evaluate the quality of our proposed methods, we compare our proposed KAUFS and MKAUFS methods with using all features and 6 other UFS methods as follows:

\begin{itemize}

\item Baseline: All original features are selected.

\item LS \cite{he2005laplacian}: Laplacian Score Feature Selection, which selects the features that can best preserve the local manifold structure of data.

\item KMFFS \cite{wang2015subspace}: Kernelized Matrix Factorization Feature Selection, which selects features through matrix factorization with an orthogonal constraint and leverages kernel functions to capture nonlinear relationships among features.

\item SCFS \cite{parsa2020unsupervised}: Subspace Clustering Unsupervised Feature Selection, which employs a self-expressive model to learn the clustering similarities in an adaptive manner to select features.

\item DRFSMFMR \cite{saberi2022dual}: Dual Regularized Unsupervised Feature Selection, which is based on matrix factorization and employs two inner product-based regularizations along with the global correlation regularization to achieve minimum redundancy in the feature subset.

\item VCSDFS \cite{karami2023unsupervised}: Variance–Covariance Subspace Distance Unsupervised Feature Selection, which considers variance and covariance information of the feature space to characterize a group of more representative features.

\item LS-CAE \cite{shaham2022deep}: Laplacian Score-regularized Concrete Autoencoder (CAE) based Feature Selection, which employs autoencoder architecture to cope with correlated features and utilizes Laplacian score criterion to keep away from the selection of nuisance features.

\end{itemize}
\subsection{Evaluation Metrics}
To evaluate the clustering performance of unsupervised feature selection, we employ two most prevalent measures, they are \cite{solorio2020review}: Clustering Accuracy (ACC) and Normalized Mutual Information (NMI). ACC measures the extent to which each cluster contain data points from the corresponding class. NMI reﬂects the consistency between clustering results and ground truth labels. Additionally, we construct a metric for measuring the selected features' redundancy rate based on Distance Correlation (DC) \cite{szekely2007measuring}, which allows the consideration of both linear and non-linear relationships between two variables. We name this metric the Redundancy Rate (RED). Here, it should be emphasized that the higher the value of ACC or NMI, the better the clustering performance is, which indicates that one can expect a feature selection method to be more successful in selecting more effective features. Additionally, the lower the value of RED, the more independent the features are within the selected subset.

The measure ACC is defined as follows:
\begin{equation}
ACC=\frac{\sum_{i=1}^{n} \delta\left(p_{i}, \operatorname{map}\left(q_{i}\right)\right)}{n},\nonumber
\end{equation}
where \(\delta\left(a,b\right)=1\) if \(a=b\) and 0, otherwise. \(p_{i}\) and \(q_{i}\) are the labels provided by  the dataset and the obtained clustering label, respectively. \(map(\bullet)\) is the permutation mapping function that matches the obtained clustering label to the equivalent label of the dataset using the Kuhn–Munkres algorithm. The NMI measure is characterized as:
\begin{equation}
N M I(P, Q)=\frac{I(P, Q)}{\sqrt{H(P) H(Q)}}, \nonumber
\end{equation}
where \(H(P)\) and \(H(Q)\) are the entropies of \(P\) and \(Q\), respectively, and \(I(P, Q)\) is the mutual information between \(P\) and \(Q\). For clustering, \(P\) and \(Q\) are the clustering results and the true labels, respectively.

Assume that \(\mathbf{F}\) is the set of selected features, which contain \(m\) features. Then RED is defined as:
\begin{equation}
RED\left(\mathbf{F}\right)=\frac{2}{m(m-1)}\sum_{i=1}^{m-1}\sum_{j=i+1}^{m}DC\left(f_{i}, f_{j}\right),\nonumber
\end{equation}
where \(DC\left(f_{i}, f_{j}\right)\) represents the distance correlation between feature \(f_{i}\) and \(f_{j}\), where \(f_{i}, f_{j}\in \mathbf{F}\).
\subsection{Experimental Setting}
There are certain parameters in the proposed methods KAUFS and MKAUFS, as well as in the other comparison methods mentioned above, whose values need to be predefined. The numbers of the selected features are taken as \(\{10t , t = 1, 2, 3,4,5,6,7,8,9,10\}\) for all datasets. In the Laplacian Score method, the parameter k in the built-in k-nearest neighbour algorithm is set to 5 for all datasets. As offered in \cite{wang2015subspace}, the penalty parameter \(\rho\) for the KMFFS method is set to \(10^8\). For the SCFS method, as stated in \cite{parsa2020unsupervised}, the parameter \(\gamma\) is fixed as \(10^6\), and the value of the other regularization parameters are searched in \(\{10^t , t = -4, -2, 0, 2, 4\}\). For the DRFSMFMR method, we follow the settings in the article\cite{saberi2022dual}, where the trade-off parameters \(\alpha\), \(\beta\), and \(\gamma\) are tuned in \(\{10^t, t = -3, -2, -1, 0, 1, 2, 3\}\). For the VCSDFS method, the value of the parameter \(\rho\) is varied from \(\{10^t, t = -5, -4, -3, -2, -1, 0, 1, 2, 3, 4, 5\}\). The settings related to the LS-CAE method are based on those given in \cite{shaham2022deep}. Finally, for our proposed methods KAUFS and MKAUFS, all trade-off parameters \(\alpha\), \(\beta\), and \(\gamma\) are tuned from \(\{10^t, t = -3, -2, -1, 0, 1, 2, 3\}\). 

In addition, we apply four widely used types of kernel functions, including linear, polynomial, Gaussian, and Laplacian kernels, to the KMFFS, KAUFS, and MKAUFS methods. Their expressions are as follows:
\begin{equation}
\begin{aligned}
\text{Linear kernel:} & \quad k\left(\mathbf{x}, \mathbf{y}\right) = \mathbf{x}^{T} \mathbf{y}. \\
\text{Polynomial kernel:} & \quad k\left(\mathbf{x}, \mathbf{y}\right) = \left(\mathbf{x}^{T} \mathbf{y} + c\right)^{d}. \\
\text{Gaussian kernel:} & \quad k\left(\mathbf{x}, \mathbf{y}\right) = \exp \left(-\frac{\|\mathbf{x}-\mathbf{y}\|_{2}^{2}}{2 \sigma^{2}}\right). \\
\text{Laplacian kernel:} & \quad k\left(\mathbf{x}, \mathbf{y}\right) = \exp \left(-\frac{\|\mathbf{x}-\mathbf{y}\|_{1}}{\sigma}\right).
\end{aligned}\nonumber
\end{equation}
Here \(c\), \(d\) and \(\sigma\) are the parameters of the kernels. For the parameter settings of the kernel functions, the parameters of the polynomial kernel are fixed as \( c=1 \) and \( d=\{2,4, 6\} \), and those of the Gaussian kernel and the Laplacian kernel are set as \( \sigma=\{10^t, t = -2, -1, 0, 1, 2\} \). For single kernel methods, we run KMFFS and KAUFS on each kernel separately. And we report both the best and the average results over all these kernels. For the MKAUFS method, which is a multiple kernel method, we implement it on a combination of the kernels mentioned above. It's worth noting that for the MKAUFS method, to avoid the influence of differences in magnitudes between different kernel matrices during optimization process, we need to standardize the centralized kernel. We use the following formula for all kernels:
\begin{equation}
\tilde{\mathbf{K}}_{i,j}=\frac{\mathbf{K}_{i,j}}{\sqrt{\mathbf{K}_{i,i}\mathbf{K}_{j,j}}}, \nonumber
\end{equation}
where \(i\) and \(j\) respectively represent the \(i\)-th row and \(j\)-th column of the kernel matrix \(\mathbf{K}\).

In the experiments, we utilize the k-means algorithm to cluster the samples based on the selected features, evaluating the UFS methods' performance through their clustering results. For simplicity, we set the number of clusters as the number of classes. It's also important to note that the k-means performance highly depends on the initial point selection. To mitigate this issue, we repeat the process 30 times with random initialization and report the average value and the standard deviation value of the performance measures ACC and NMI. For the redundancy measure, for each UFS method, we report the average redundancy rate of all feature subsets, which corresponds to the feature subset results where each achieves the best clustering performance.
\subsection{Result and Analysis}
We first present the numerical experimental results of all algorithms on eight datasets, and finally, we conduct parameter sensitivity tests on the KAUFS method and the MKAUFS method.

\subsubsection{Experimental Results}
To compare different UFS methods, we summarize their best clustering results in Tables \ref{acctable} and \ref{nmitable}. Particularly, for the KMFFS method and the KAUFS method, “-b” denotes the best result and “-a” means the average of those 14 kernels. In Table \ref{redtable}, we report the average redundancy rate of these methods in all datasets. In Tables \ref{acctable}, \ref{nmitable}, and \ref{redtable}, we bold the best and underlined the second-best performance results.

For the clustering performance, the ACC results given in Table \ref{acctable} and NMI results given in Table \ref{nmitable}, exhibit that our proposed methods provide the best results in most cases. Moreover, among the other state-of-the-art methods, DRFSMFMR method and KMFFS performs the best in most datasets. The difference between DRFSMFMR and our methods demonstrates the advantage of the kernel approach, while the contrast between KMFFS and our methods highlights the advantage of kernel alignment as a subspace distance. However, for the ACC results, it's worth noting that our proposed methods, along with others such as DRFSMFMR and LS-CAE, exhibit nearly equivalent efficacy to the Baseline on dataset Yale64. Particularly, in this dataset, the SCFS method outperforms the other methods. 

In addition, in the case of single kernel methods like KMFFS and KAUFS, significant differences between the best and average results are noticeable across datasets Yale64, Orlraws, and GLIOMA. This observation aligns with the general conclusion that the performance of single kernel methods is typically contingent upon the choice of kernel function. Such findings also motivate the development of multiple kernel learning techniques. Furthermore, we can observe that the MKAUFS method frequently achieves close or even superior performance compared to the results obtained using the best single kernel of the KAUFS method. This observation reveals the practical utility of multiple kernel learning methods, especially considering the avoidance of exhaustive searches across numerous candidate kernels.

For the average redundancy rate results in Table \ref{redtable}, we observe that the feature subsets selected by the three kernel-based algorithms, namely KMFFS, KAUFS, and MKAUFS methods, generally exhibit relatively low redundancy rates. This suggests the effectiveness of kernel methods in capturing nonlinear relationships between features, thus enhancing the independence among the selected features. It is evident that high redundancy rates are observed for DRFSMFMR and VCSDFS, which solely consider linear relationships and cannot capture more complex relationships among features, resulting in higher redundancy rates. Ranking second are the Laplacian-based LS and LS-CAE methods, whose performance also reflects the effectiveness of Laplacian-based feature selection algorithms in characterizing the manifold structure of the data, thereby better capturing both linear and nonlinear relationships between features. However, as these methods evaluate each feature individually, the interaction between features is ignored, which could potentially compromise clustering performance.

\begin{table}[H]\centering
\caption {Clustering results \(\left(ACC\%\pm std\%\right)\) of different feature selection algorithms on different datasets. The best results are bolded and the second best results are underlined (the higher the better).} \label{acctable}
\vspace{5pt}
\begin{tabu} to \textwidth { X[l,m] X[l,m] X[l,m] X[l,m] X[l,m]}
\hline
Dataset & Baseline & LS & LS-CAE  & VCSDFS \\
\hline
WarpAR & 25.89 \(\pm\) 3.28 & 27.53 \(\pm\) 2.54 & 38.28 \(\pm\) 3.44 & 41.27 \(\pm\) 1.58  \\
WarpPIE & 26.19 \(\pm\) 1.52 & 35.44 \(\pm\) 1.28 & 42.09 \(\pm\) 3.78 & 40.88 \(\pm\) 4.32 \\
Yale64 & 48.63 \(\pm\) 3.55 & 41.35 \(\pm\) 2.64 & 48.52 \(\pm\) 2.34 & 46.62 \(\pm\) 2.59\\
Orlraws & 75.77 \(\pm\) 4.91 & 76.10 \(\pm\) 1.71 & 75.36 \(\pm\) 3.81 & 69.86 \(\pm\) 5.11\\
Madelon & 50.32 \(\pm\) 0.41 & 54.67 \(\pm\) 1.47 & 55.22 \(\pm\) 0.05 & 58.63 \(\pm\) 0.13\\
GLIOMA & 56.53 \(\pm\) 3.99 & 52.37 \(\pm\) 3.86 & 52.95 \(\pm\) 0.01 & 62.33 \(\pm\) 4.95\\
TOX\_171 & 41.85 \(\pm\) 2.31 & 40.48 \(\pm\) 2.16 & 51.28 \(\pm\) 2.94 & 49.02 \(\pm\) 3.51\\
Prostate\_GE & 58.62 \(\pm\) 0.00 & 60.68 \(\pm\) 0.00 & 69.10 \(\pm\) 0.40 & 62.74 \(\pm\) 1.34\\
\hline
\end{tabu}
\begin{tabu} to \textwidth { X[l,m] X[l,m] X[l,m] X[l,m] X[l,m]}
Dataset & DRFSMFMR & SCFS &  KMFFS-b & KMFFS-a\\
\hline
WarpAR & 44.28 \(\pm\) 2.24 & 34.23 \(\pm\) 2.15 & 40.40 \(\pm\) 3.37 & 36.93\\
WarpPIE & 43.57 \(\pm\) 3.71 & 35.33 \(\pm\) 1.75 & 42.76 \(\pm\) 3.42 & 33.85\\
Yale64 & 48.64 \(\pm\) 4.34 & \textbf{59.28} \(\pm\) \textbf{2.55} & \underline{54.02 \(\pm\) 1.85} & 46.31\\
Orlraws & 74.36 \(\pm\) 3.42 & 70.63 \(\pm\) 3.29 & 78.90 \(\pm\) 3.88 & 71.66\\
Madelon & 62.65 \(\pm\) 0.16 & 59.81 \(\pm\) 1.08 & 61.18 \(\pm\) 1.56 & 59.27\\
GLIOMA & 58.23 \(\pm\) 4.76 & 57.78 \(\pm\) 0.81 & 62.26 \(\pm\) 8.02 & 45.32\\
TOX\_171 & 58.50 \(\pm\) 2.64 & 55.56 \(\pm\) 0.06 & 46.61 \(\pm\) 4.99 & 42.73\\
Prostate\_GE & 68.73 \(\pm\) 4.51 & 59.81 \(\pm\) 0.48 & 62.41 \(\pm\) 2.33 & 52.78\\
\hline
\end{tabu}
\begin{tabu} to \textwidth { X[l,m] X[l,m] X[l,m] X[l,m]}
Dataset &  KAUFS-b & KAUFS-a & MKAUFS\\
\hline
WarpAR & \underline{46.22 \(\pm\) 2.18} & 40.38 & \textbf{49.23 \(\pm\) 2.66}\\
WarpPIE & \underline{43.94 \(\pm\) 1.29} & 38.63 & \textbf{45.57 \(\pm\) 2.50} \\
Yale64 & 49.17 \(\pm\) 3.48 & 35.29 & 49.82 \(\pm\) 4.18 \\
Orlraws & \underline{88.33 \(\pm\) 6.69} & 71.16 & \textbf{89.37 \(\pm\) 5.41} \\
Madelon & \underline{66.98 \(\pm\) 1.95} & 64.46 & \textbf{67.04 \(\pm\) 1.13}\\
GLIOMA & \underline{67.13 \(\pm\) 4.81} & 53.78 & \textbf{68.24 \(\pm\) 0.73} \\
TOX\_171 & \underline{60.47 \(\pm\) 3.15} & 54.27 & \textbf{60.88 \(\pm\) 3.77} \\
Prostate\_GE & \underline{71.51 \(\pm\) 4.45} & 65.68 & \textbf{73.24 \(\pm\) 1.26}\\
\hline
\end{tabu}
\end{table}

\begin{table}[H]\centering
\caption {Clustering results \(\left(NMI\%\pm std\%\right)\) of different feature selection algorithms on different datasets. The best results are bolded and the second best results are underlined (the higher the better).} \label{nmitable}
\vspace{5pt}
\begin{tabu} to \textwidth { X[l,m] X[l,m] X[l,m] X[l,m] X[l,m]}
\hline
Dataset & Baseline & LS & LS-CAE  & VCSDFS \\
\hline
WarpAR & 28.29 \(\pm\) 2.44 & 27.45 \(\pm\) 2.52 & 42.48 \(\pm\) 2.54 & 48.72 \(\pm\) 6.95  \\
WarpPIE & 26.72 \(\pm\) 2.07 & 33.29 \(\pm\) 1.60 & 44.87 \(\pm\) 2.02 & \underline{48.04 \(\pm\) 1.98} \\
Yale64 & 56.42 \(\pm\) 2.27 & 51.93 \(\pm\) 2.45 & 54.77 \(\pm\) 2.32 & 53.30 \(\pm\) 2.38\\
Orlraws & 80.66 \(\pm\) 3.97 & 75.43 \(\pm\) 1.47 & 81.54 \(\pm\) 2.51 & 77.41 \(\pm\) 3.12\\
Madelon & 2.54 \(\pm\) 0.00 & 7.88 \(\pm\) 0.45 & 8.49 \(\pm\) 0.05 & 7.46 \(\pm\) 0.00\\
GLIOMA & 49.85 \(\pm\) 2.68 & 48.28 \(\pm\) 2.38 & 46.32 \(\pm\) 5.18 & 50.54 \(\pm\) 3.55\\
TOX\_171 & 14.23 \(\pm\) 2.09 & 19.91 \(\pm\) 2.57 & 26.84 \(\pm\) 1.78 & \textbf{37.21} \(\pm\) \textbf{4.32}\\
Prostate\_GE & 2.62 \(\pm\) 0.00 & 6.49 \(\pm\) 0.00 & 7.78 \(\pm\) 0.45 & 7.44 \(\pm\) 0.67\\
\hline
\end{tabu}
\begin{tabu} to \textwidth { X[l,m] X[l,m] X[l,m] X[l,m] X[l,m]}
Dataset & DRFSMFMR & SCFS &  KMFFS-b & KMFFS-a\\
\hline
WarpAR & 42.07 \(\pm\) 2.30 & 31.37 \(\pm\) 1.61 & 40.86 \(\pm\) 2.08 & 32.49\\
WarpPIE & 47.32 \(\pm\) 2.77 & 41.61 \(\pm\) 1.46 & 44.74 \(\pm\) 3.23 & 35.21\\
Yale64 & 55.04 \(\pm\) 2.11 & \textbf{64.28} \(\pm\) \textbf{1.84} & \underline{62.24 \(\pm\) 2.36} & 55.04\\
Orlraws & 78.11 \(\pm\) 2.31 & 79.83 \(\pm\) 2.58 & 84.87 \(\pm\) 3.11 & 78.45\\
Madelon & 12.13 \(\pm\) 1.10 & 3.46 \(\pm\) 0.00 & 10.65 \(\pm\) 1.69 & 2.25\\
GLIOMA & 53.98 \(\pm\) 4.35 & 51.61 \(\pm\) 3.03 & 53.06 \(\pm\) 6.75 & 25.62\\
TOX\_171 & 33.58 \(\pm\) 1.75 & 34.62 \(\pm\) 0.79 & \underline{35.64 \(\pm\) 5.37} & 24.32\\
Prostate\_GE & \textbf{18.13} \(\pm\) \textbf{1.83} & 7.46 \(\pm\) 1.24 & 8.90 \(\pm\) 3.67 & 3.79\\
\hline
\end{tabu}
\begin{tabu} to \textwidth { X[l,m] X[l,m] X[l,m] X[l,m]}
Dataset &  KAUFS-b & KAUFS-a & MKAUFS\\
\hline
WarpAR & \underline{50.71 \(\pm\) 4.37} & 41.71 & \textbf{54.04 \(\pm\) 1.92}\\
WarpPIE & 45.81 \(\pm\) 3.44 & 39.18 & \textbf{48.82 \(\pm\) 1.84} \\
Yale64 & 50.79 \(\pm\) 1.93 & 41.58 & 52.10 \(\pm\) 2.56 \\
Orlraws & \underline{86.67 \(\pm\) 0.24} & 74.33 & \textbf{87.21 \(\pm\) 2.32} \\
Madelon & \underline{15.36 \(\pm\) 0.78} & 9.23 & \textbf{16.01 \(\pm\) 2.19}\\
GLIOMA & \underline{59.11 \(\pm\) 3.83} & 44.68 & \textbf{59.33 \(\pm\) 3.07} \\
TOX\_171 & 29.47 \(\pm\) 2.84 & 24.22 & 32.21 \(\pm\) 4.04 \\
Prostate\_GE & 15.46 \(\pm\) 1.65 & 10.51 & \underline{17.97 \(\pm\) 1.92}\\
\hline
\end{tabu}
\end{table}

\begin{table}[H]\centering
\caption {Average redundancy rates (average RED\%) of different feature selection algorithms on different datasets. The best results are bolded and the second best results are underlined (the lower the better).} \label{redtable}
\vspace{5pt}
\begin{tabu} to \textwidth { X[l,m] X[l,m] X[l,m] X[l,m] X[l,m]}
\hline
Dataset & LS & LS-CAE  & VCSDFS & DRFSMFMR\\
\hline
WarpAR & 49.29 & 45.39 & 62.97 & 52.91 \\
WarpPIE & 67.66 & 47.58 & 71.85 & 70.19 \\
Yale64 & 34.64 & 32.79 & 42.25 & 40.76\\
Orlraws & 39.55 & 35.06 & 41.31 & 46.60\\
Madelon & 3.58 & 3.67 & 4.32 & 4.12\\
GLIOMA & \underline{38.10} & 38.21 & 48.31 & 44.47\\
TOX\_171 & 18.20 & 19.31 & 21.95 & 26.34\\
Prostate\_GE & 45.56 & 48.2 & 50.18 & 47.06\\
\hline
\end{tabu}
\begin{tabu} to \textwidth { X[l,m] X[l,m] X[l,m] X[l,m] X[l,m]}
Dataset & SCFS &  KMFFS & KAUFS & MKAUFS\\
\hline
WarpAR & 61.70 & \underline{39.63} & \textbf{38.44} & 42.89\\
WarpPIE & 41.78 & \underline{43.89} & \textbf{40.33} & 44.21\\
Yale64 & 28.63 & 29.51 & \textbf{23.51} & \underline{28.25}\\
Orlraws & 42.35 & 32.18 & \textbf{26.98} & \underline{27.59}\\
Madelon & 4.76 & \textbf{3.27} & 3.47 & \underline{3.32}\\
GLIOMA & \textbf{37.29} & 42.73 & 43.69 & 39.82\\
TOX\_171 & 27.87 & 17.16 & \underline{16.23} & \textbf{16.17}\\
Prostate\_GE & 66.97 & 38.67 & \underline{34.59} & \textbf{31.71}\\
\hline
\end{tabu}
\end{table}

\subsubsection{Parameter Analysis}

Our models KAUFS and MKAUFS have two main important trade-off parameters, \(\alpha\) and \(\beta\), which control inner-product regularizations to capture local correlation information among features. We empirically assess their impact by varying \(\alpha\) and \(\beta\) within the range \(\{10^t, t = -3, -2, -1, 0, 1, 2, 3, 4\}\), and setting \(\gamma\) to 1. We fix the number of selected features at 100 and compute the clustering performance (ACC and NMI) of these two algorithms. Figures \ref{fig1} and \ref{fig2} depict how the clustering results in terms of ACC and NMI evolve with these penalty parameters on the WarpAR dataset and Yale64 datasets respectively. The figures reveal that the performance of KAUFS and MKAUFS exhibits stability across a broad range of \(\alpha\) and \(\beta\) values.

\begin{figure}[H]\centering
\includegraphics[width=1\textwidth]{./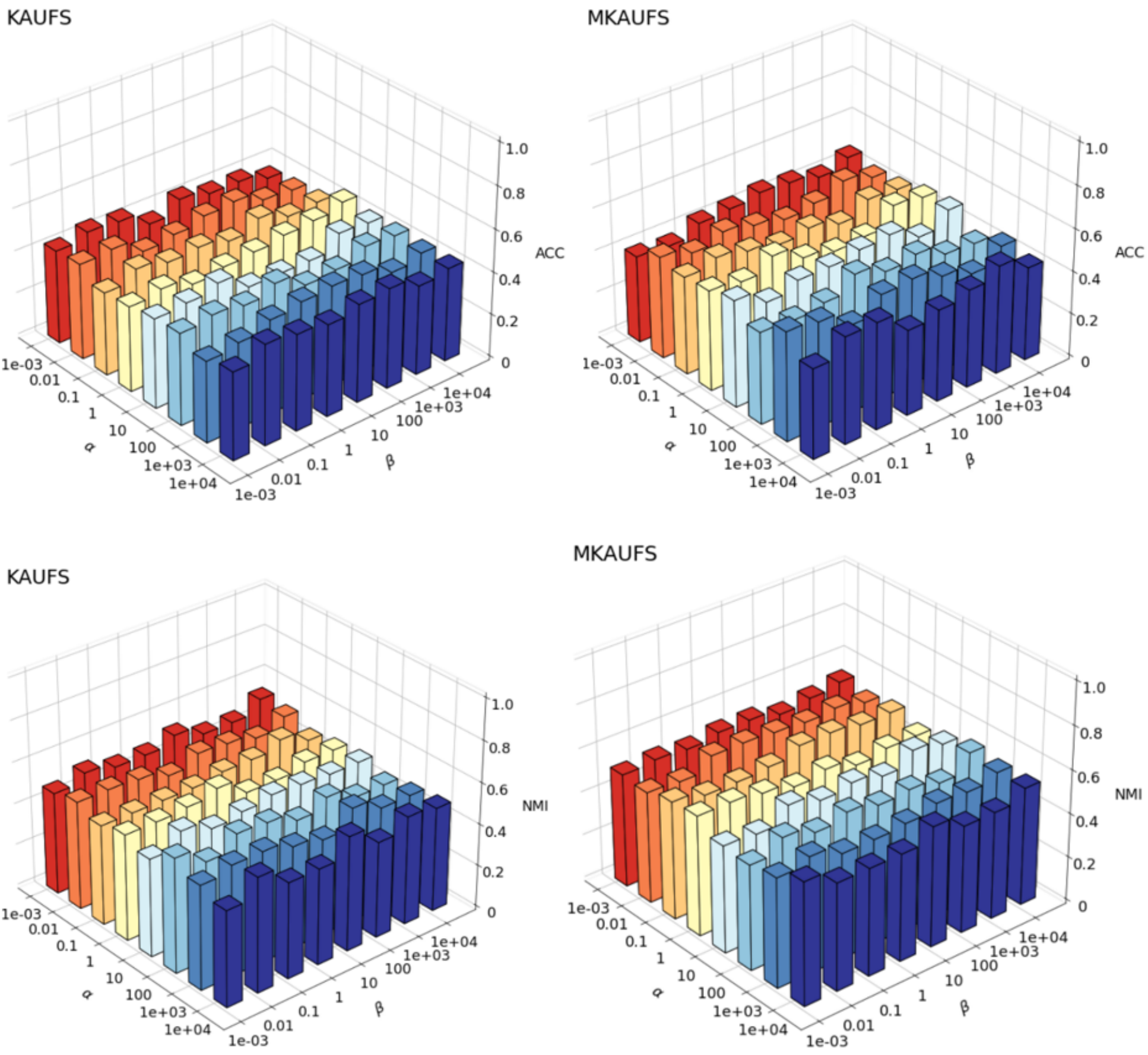}
\caption{Clustering performance ACC and NMI of KAUFS and MKAUFS w.r.t \(\alpha\) and \(\beta\) on the WarpAR dataset.}\label{fig1}
\end{figure}

\begin{figure}[H]\centering
\includegraphics[width=1\textwidth]{./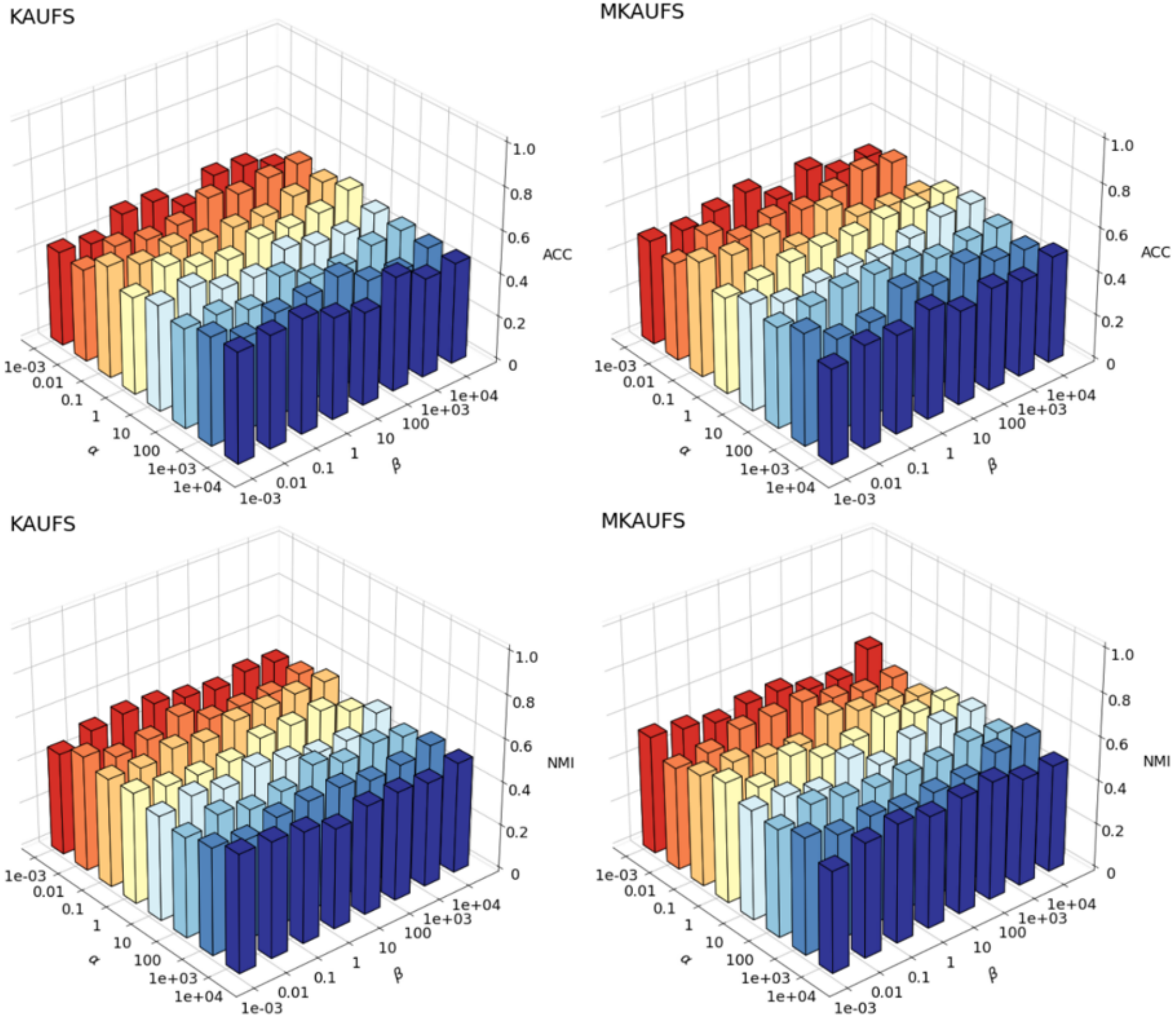}
\caption{Clustering performance ACC and NMI of KAUFS and MKAUFS w.r.t \(\alpha\) and \(\beta\) on the Yale64 dataset.}\label{fig2}
\end{figure}

\section{Conclusions}\label{Conclusions}

In this paper, we propose an unsupervised feature selection algorithm based on subspace learning and matrix factorization. We introduce a novel subspace distance, termed as the kernel alignment subspace distance, capable of capturing the nonlinear information embedded in the dataset features. Additionally, to further reduce the redundancy of the selected features subset, two inner product-based regularizations were employed to capture the dependence information among the selected features during the feature selection process. Furthermore, an efficient algorithm was developed to address the optimization problem of the proposed model, and we extend the model to a multi-kernel framework, which effectively learns an appropriate consensus kernel from a pool of kernels. This approach reduces the complexity of kernel selection and therefore holds greater practical significance. Finally, our extensive experiments on real datasets clearly demonstrated that our methods can effectively identify the most representative features with minimal redundancy. There are several interesting future directions. In particular, 1) improving subspace distance measurement methods could allow for the use of non-positive definite kernels, 2) utilizing the manifold structure information of the dataset, such as constructing a Laplacian graph and incorporating this information into the regularization framework, can also serve as a promising tool for enhancing the clustering performance and reducing redundancy of the feature subset, and 3) developing fast and reliable optimization algorithms to search for the global optimum of such optimization problems, rather than using matrix-based gradient descent algorithms to find local minima.

\clearpage
\bibliographystyle{unsrt}

\end{document}